\let\ORIlabel\label
\let\ORIrefstepcounter\refstepcounter
\AddToHook{package/hyperref/before}{
   \let\label\ORIlabel 
   \let\refstepcounter\ORIrefstepcounter}
   
\documentclass{siamart190516}
\usepackage[utf8]{inputenc}

\usepackage{amsmath,amssymb,amsfonts,color,multirow,comment,pgfplots,pgfplotstable,tikz}

\usepackage{longtable}
\usepackage{booktabs}
\usepackage{graphicx}
\usetikzlibrary{plotmarks}
\usetikzlibrary{positioning}
\usetikzlibrary{arrows.meta}
\usepgfplotslibrary{patchplots}
\usepgfplotslibrary{groupplots}
\usepackage{grffile}
\usepackage{url}
\usepackage{hyperref}
\usepackage{cleveref}
\usepackage{algorithm,algpseudocode}
\usepackage{color}
\usepackage{mdframed}

\pgfplotsset{compat=1.17}

\definecolor{greeng}{rgb}{0.0, 0.5, 0.4}

\newcommand{\name}{sCD}

\definecolor{brightpink}{rgb}{1.0, 0.0, 0.5}

\definecolor{forestgreen(web)}{rgb}{0.13, 0.55, 0.13}

\DeclareMathOperator{\supp}{supp}
\DeclareMathOperator{\nnz}{nnz}

\DeclareMathOperator{\argmin}{argmin}

\DeclareMathOperator{\var}{var}

\title{Nonnegative Matrix Factorization in the \\  Component-Wise L1  Norm for Sparse Data\thanks{GS and NG acknowledge the support by the European Union (ERC consolidator, eLinoR, no 101085607). The work of GS\ was partially supported by INdAM-GNCS through Progetti di Ricerca. Kévin Dubrulle is a FRIA grantee of the Fonds de la Recherche Scientifique - FNRS.}}
\author{Giovanni Seraghiti\thanks{Corresponding author. University of Mons, Rue de Houdain 9, 7000 Mons, Belgium, and 
 Dipartimento di Ingegneria Industriale, Universit\`a degli Studi di Firenze, Viale Morgagni 40/44, 50134 Firenze, Italia. Member of the INdAM Research Group GNCS.}   
 \and
Kévin Dubrulle\thanks{University of Mons, Rue de Houdain 9, 7000 Mons, Belgium.} 
\and
Arnaud Vandaele\footnotemark[3]
\and
Nicolas Gillis\footnotemark[3]
}
\date{March 2021}
\begin{document}
\maketitle

\begin{abstract}
   Nonnegative matrix factorization (NMF) approximates a nonnegative matrix, $X$, by the product of two nonnegative factors, $WH$, where $W$ has $r$ columns and $H$ has $r$ rows. In this paper, we consider NMF using the component-wise L1 norm as the error measure (L1-NMF), which is suited for data corrupted by heavy-tailed noise, such as Laplace noise or salt and pepper noise, or in the presence of outliers. Our first contribution is an NP-hardness proof for L1-NMF, even when $r=1$, in contrast to the standard NMF that uses least squares. Our second contribution is to show that  L1-NMF strongly enforces sparsity in the factors for sparse input matrices, thereby favoring  interpretability. However, if the data is affected by false zeros, too sparse solutions might degrade the model. Our third contribution is a new, more general, L1-NMF model for sparse data, dubbed weighted L1-NMF (wL1-NMF), where the sparsity of the factorization is controlled by adding a penalization parameter to the entries of $WH$ associated with zeros in the data. The fourth contribution is a new coordinate descent (CD) approach for wL1-NMF, denoted as sparse CD (\name), where each subproblem is solved by a weighted median algorithm. To the best of our knowledge, \name\ is the first algorithm for L1-NMF whose complexity scales with the number of nonzero entries in the data, making it efficient in handling large-scale, sparse data. We perform extensive numerical experiments on synthetic and real-world data to show the effectiveness of our new proposed model (wL1-NMF) and algorithm (\name).
\end{abstract}

\begin{keywords}
    low-rank matrix approximations, nonnegative matrix factorization, robustness, least absolute deviation, sparsity, missing data. 
\end{keywords}


\section{Introduction}
Given a nonnegative matrix $X \in \mathbb{R}_+^{m \times n}$ and a factorization rank $r$, 
the goal of nonnegative matrix factorization (NMF) is to find $W \in \mathbb{R}_+^{m \times r}$ and $H \in \mathbb{R}_+^{r\times n}$ such that $X \approx WH$. 
NMF can be formulated as follows
\begin{equation}
    \min_{W \geq 0, \ H\geq 0} \; \; d(X, WH),
    \label{eq:gen_NMF}
\end{equation}
where $d: \mathbb{R}_+^{m \times n} \times \mathbb{R}_+^{m \times n} \to \mathbb{R}_+$ is an error measure. 
Two popular examples~\cite{lee1999learning} are (1)~Frobenius NMF (FroNMF), which is the maximum likelihood estimator (MLE) for additive Gaussian noise,  that minimizes the least squares error, $d(X,WH) = \lVert X - WH \rVert_F^2 = \sum_{i=1}^m \sum_{j=1}^n(X-WH)_{i,j}^2$,  
and (2)~Kullback-Leibler NMF (KL-NMF), which is the MLE when $X$ follows a Poisson distribution of parameters $WH$, 
and the error measure is the Kullback-Leibler divergence, $d(X,WH)=\sum_{i=1}^m \sum_{j=1}^nX_{i,j}\log \left( \frac{X_{i,j}}{(WH)_{i,j}}\right)-X_{i,j}+(WH)_{i,j}$. 
Although FroNMF, KL-NMF, and their extensions have been widely studied, see, e.g.,~\cite{CZA09, gillis2020nonnegative}, 
it is well known that they are sensitive to outliers. Moreover, FroNMF and KL-NMF exhibit poor performance when data are affected by long-tailed noise such as Laplacian noise or salt and pepper noise. To overcome these limitations, more robust NMF models have been proposed. A popular choice for robust NMF is the L21 norm~\cite{kong2011robust}, that is, 
$d(X, WH)=\lVert X- WH \rVert_{2,1}=\sum_{j=1}^n \sqrt{ \sum_{i=1}^m (X-WH)^2_{i,j}}$. 
Among robust NMF models, much less attention has been devoted to L1-NMF, which models additive Laplace noise, where the error measure is the component-wise L1 norm:
\begin{equation} 
d(X, WH)= ||X-WH||_1 =\sum_{i=1}^m \sum_{j=1}^n |X-WH|_{i,j}. 
\label{eq:l1NMF}
\end{equation}
This is due to the fact that solving L1-NMF requires more involved optimization strategies because its objective is not differentiable.  
In fact, if we remove the nonnegativity constraints in~\eqref{eq:l1NMF}, the low-rank matrix approximation (LRMA) problem in the L1 norm is known to be NP-hard, even  for $r=1$~\cite{gillis2018complexity}. 

\paragraph{Outline and Contributions} 

In Section~\ref{sec:motiv}, we further motivate L1-NMF, and briefly review the most relevant works on the topic. 
In Section~\ref{sec:complexity}, we prove the NP-hardness of rank-one L1-NMF, that is, \eqref{eq:l1NMF} when $r=1$. 
In Section~\ref{sec:sparsity_inducing}, we analyze the sparsity-inducing property of L1-NMF on its factors  when the original data is sparse: we show that solutions of L1-NMF are expected to be significantly sparser than that of L2-NMF. 
This motivates the introduction of the wL1-NMF model in Section~\ref{sec:w_ell_1_nmf}, a generalization of L1-NMF, in cases where the input data is too sparse due, for example, to the presence of false zeros. 
In Section~\ref{sec:l1_for_sparse}, we present our new algorithm, \name, for wL1-NMF.   
We provide extensive numerical experiments in Section~\ref{sec:num_res}, comparing our new model wL1-NMF with other NMF models and our new algorithm, \name, with the state of the art algorithms for L1-NMF.


Let us summarize our four main contributions: 

\begin{enumerate}
    \item We show that L1-NMF is NP-hard when $r=1$ (Theorem~\ref{lem:complexityyesyes}).

    \item 
    We show that  L1-NMF has the fundamental property that the sparsity of $X$ is strongly reflected in the factors $W$ and $H$; see Section~\ref{sec:sparsity_inducing}.

    \item  Even though sparsity might favor the interpretability of the factorization, excessively sparse solutions might be inappropriate,  
which may happen for zero-inflated data, such as word-document count matrices. 
For this reason, we propose a generalization of L1-NMF, referred to as weighted L1-NMF (wL1-NMF), where a parameter $\lambda$ is introduced to downweight the importance of zeros in the input data. 
wL1-NMF uses the following error measure
\begin{equation}
    \ell(X,W,H,\lambda) := \sum_{(i,j) \in \kappa^+} \lvert X -WH \rvert_{i,j} + \lambda \sum_{(i,j) \in \kappa^0} \lvert WH \rvert_{i,j},
    \label{eq:weight_l1}
\end{equation}
where $\lambda \in [0,1]$, 
    $\kappa^+=\left\{ (i,j) \ | \ X_{i,j}>0 \right\}$ 
    and 
    $\kappa^0=\left\{ (i,j) \ | \ X_{i,j}=0 \right\}$.
wL1-NMF covers L1-NMF as a special case when $\lambda=1$, while the case $\lambda=0$ corresponds to a nonnegative low-rank matrix completion problem where zero entries correspond to missing entries. 

    \item wL1-NMF can be computed using a coordinate descent algorithm that solves one scalar L1 regression subproblem, also known as least absolute deviation (LAD), for each entry of $W$ and $H$ in an alternating fashion, with a computational cost per iteration of $\mathcal{O}(mnr\log(mn))$. We propose a new variant, dubbed \name, that exploits the nonnegativity and the sparsity of the original matrix to reduce the computational cost to $\mathcal{O}(r\nnz(X)\log(\nnz(X)))$, where $\nnz(X)$ denotes the number of nonzero entries of $X$.  
\end{enumerate}

\section{L1-NMF: motivations and previous works} 
\label{sec:motiv} 

NMF finds applications in a variety of fields, such as feature extraction~\cite{lee1999learning,guillamet2002non}, topic modeling and document classification~\cite{shahnaz2006document,ding2008equivalence}, hyperspectral unmixing~\cite{bioucas2012hyperspectral,ma2013signal}; see~\cite{CZA09, gillis2020nonnegative} for books on the topic. 
An appropriate choice of the error measure in~\eqref{eq:gen_NMF} is crucial, and it depends on the type of data and the task at hand. Let us illustrate this with a simple example, showing the different low-rank approximations obtained by FroNMF, KL-NMF, L21-NMF and L1-NMF of the same matrix $X$. We constructed a biadjacency matrix representing the relationships between two groups of objects. 
The $(i,j)$th element of $X$ is equal to 1 if there is a connection between the object represented by the $j$th column and the object represented by the $i$th row. A typical application is topic modeling, where $X$ is a word-by-document matrix with $X_{i,j}=1$ if word $i$ is contained in document $j$, and $X_{i,j}=0$ otherwise. A standard problem in topic modeling is detecting groups of documents sharing the same topic (that is, same subset of words), which can be addressed by computing the NMF of the word-by-document matrix~\cite{lee1999learning}. 
We consider a collection of 6 documents, from a dictionary of 6 words, divided into two groups of three elements each, containing documents that share exactly three words.
We also add a few common words that are in documents from both classes. The word-by-document matrix is 
\begin{equation*}
    X=\begin{pmatrix}
    1  &  1  &  1  &  0 &   0   & 1 \\
    1  &  1  &  1  &  0  &  0  &  0 \\
    1  &  1  &  1  &  0  &  1  &  0 \\
    0  &  1  &  0  &  1 &  1  &  1 \\
     0  &  0  &  0  &  1 &  1  &  1 \\
   1  &  0  &  0  &  1 &  1  & 1 
\end{pmatrix}.
\end{equation*} 
We compute rank-2 NMFs using the hierarchical alternate least squares algorithm (HALS, a coordinate descent method, similar to \name\ but for least squares) 
 for FroNMF~\cite{cichocki2007hierarchical,li2009fastnmf,gillis2020nonnegative}, the multiplicative update (MU) approach for KL-NMF~\cite{lee1999learning} and L21-NMF~\cite{kong2011robust}, and our new algorithm \name\ for L1-NMF (see Section~\ref{sec:l1_for_sparse}). Each algorithm runs for 30 iterations and is initialized with 3 iterations of HALS on random factors. 
The results are as follows: \vspace{0.2cm}

\begin{minipage}{0.45\textwidth}
\begin{center}
FroNMF \vspace{-0.2cm} 
\begin{equation*}
    \begin{pmatrix}
    1.03  & 1.03 &  0.98 & 0.20 & 0.40 & 0.40\\
    0.98 & 0.98 & 0.96 & 0     &  0.20 & 0.20 \\
    1.03 &  1.03 &  0.98 & 0.20 & 0.40 & 0.40 \\
    0.40 & 0.40 & 0.20 & 0.98 & 1.03 &  1.03\\
     0.20 &  0.20 &  0   &    0.96 &  0.98 &  0.98 \\
    0.40 & 0.40 & 0.20 & 0.98 &  1.03  & 1.03
\end{pmatrix},
\end{equation*}
\end{center}
\end{minipage}
\begin{minipage}{0.45\textwidth}
\begin{center}
KL-NMF \vspace{-0.2cm}  
\begin{equation*}
    \begin{pmatrix}
    0.86 &    1.04 &   0.78  &  0.31  &  0.59  &  0.41\\
    0.78 &    1.09 &    0.82 &    0 &    0.30 &    0\\
    1.04 &    1.45 &    1.09 &    0 &    0.41 &    0 \\
    0.59 &    0.41 &   0.30 &   0.78  &    0.86 &    1.04 \\
    0.31 &    0 &    0 &    0.82 &    0.78 &    1.09 \\
    0.41 &  0 &    0 &  1.09 &    1.04 &    1.45
\end{pmatrix},
\end{equation*}
\end{center}
\end{minipage}

\vspace{0.5cm}
\begin{minipage}{0.45\textwidth}
\begin{center}
L21-NMF \vspace{-0.2cm} 
\begin{equation*}
   \begin{pmatrix}
    1  &  1  &  1  &  0.08 &   0.35   & 0.08\\
    0.98  &  0.98  &  1  &  0  &  0.27  &  0 \\
    1.01  &  1.01  &  1  &  0.10  &  0.37  &  0.10 \\
    0.34  &  0.34  &  0  &  1 &  1.03  &  1.03 \\
     0.33  &  0.33  &  0  &  0.97 &  0.98  &  0.99 \\
   0.34  &  0.34  &  0  &  1 &  1.02  &  1.03
\end{pmatrix}, 
\end{equation*}
\end{center}
\end{minipage}
\begin{minipage}{0.45\textwidth}
\begin{center}
L1-NMF \vspace{-0.2cm} 
\begin{equation*}
    \begin{pmatrix}
    1  &  1  &  1  &  0 &   0   & 0\\
    1  &  1  &  1  &  0  &  0  &  0 \\
    1  &  1  &  1  &  0  &  0  &  0 \\
    0  &  0  &  0  &  1 &  1  &  1 \\
     0  &  0  &  0  &  1 &  1  &  1 \\
   0  &  0  &  0  &  1 &  1  &  1
\end{pmatrix}.
\end{equation*}
\end{center}
\end{minipage}
\vspace{0.2cm}

\noindent While FroNMF, KL-NMF and L21-NMF spread the extra-cluster information among the non-diagonal blocks, L1-NMF places more weight on the intra-cluster information and interprets the few intersections between documents from different clusters as outliers that can be neglected. 
L1-NMF provides the sparsest decomposition, favoring the interpretability of the model. 
See Section~\ref{sub:mnist_NMF_gen} for a similar example on images.

It is well known that the L1 norm is much more robust to outliers than the L2 norm in the context of regression problems~\cite{portnoy1997gaussian,chen2008analysis,ke2005robust}, or in low-rank matrix approximation (LRMA), where L1-LRMA and robust PCA are much more robust than  PCA~\cite{croux1998robust,ke2005robust,kwak2008principal,markopoulos2013some,markopoulos2014optimal}. However, the L1 norm yields optimization problems that are more difficult to solve than those involving the L2 norm. Therefore, LRMA using the component-wise L1 norm (L1-LRMA) is a relatively unexplored area. Let us mention important works addressing L1-LRMA: the alternating  optimization approach by Ke and Kanade~\cite{ke2003robust,ke2005robust}, 
the Wiberg algorithm by Eriksson and Van Den Hengel~\cite{eriksson2010efficient}, and the augmented Lagrangian approaches by Zheng et al.~\cite{zheng2012practical}. Another relevant contribution is the work of Song et al.~\cite{song2017low} who introduced an algorithm for L1-LRMA with provable approximation guarantees that scales with the number of nonzero entries in the data using sketching (that is, they use random projections of the data). 

Similarly, few NMF models consider the L1 norm directly, and robust NMF usually includes other error measures that are easier to optimize, such as the L21 norm~\cite{kong2011robust,huang2014robust}, the more general L2$p$ norm ($p>0$)~\cite{li2017robust}, or functions that smoothly interpolate between the Frobenius norm and the L1-norm, for example, the Huber function~\cite{du2012robust,guo2021modified}. Another strategy is to approximate $X$ by $WH-S$, where $S$ is a sparse matrix intended to capture the sparse additive noise in the data~\cite{zhang2011robust,shen2014robust}. 

Among the notable contributions on L1-NMF, Rahimi et al.~\cite{rahimi2024projected} showed that L1-NMF, despite being neither smooth nor convex, is paraconvex, which is a generalization of the class of weakly convex functions. Moreover, they proposed a convergent projected subgradient method for paraconvex functions, optimizing both $W$ and $H$ simultaneously, which was successfully tested on L1-NMF. Despite having theoretical guarantees, the subgradient approach might be slow in practical applications and does not exploit the property of L1-NMF to be convex with respect to each variable individually. 
Thus, block coordinate descent (BCD) is a valid alternative to compute an approximate minimizer of~\eqref{eq:l1NMF} by sequentially optimizing over $W$ and $H$ and exploiting the convexity of the subproblems, as done in~\cite{ke2003robust,ke2005robust} in the unconstrained case. 
Moreover, the subproblem for $H$ (or $W$) is separable by columns (or rows), and equivalent to $n$ (or $m$) multivariate L1 regression problems, a.k.a.\ LAD subproblems. 
Ke and Kanade~\cite{ke2003robust} showed that convex programming can be used to solve each LAD subproblem, but it yields algorithms that are not scalable for large dimensions~\cite{ke2005robust}. To address this limitation, Guan et al.~\cite{cao2023manhattan} applied Nesterov smoothing to the nonsmooth LAD subproblem. This strategy consists of solving a smooth approximation of the dual problem and projecting the solution back to the primal space. The obtained solution is an approximate stationary point of the primal function~\cite{nesterov2005smooth}.
Another approach is to split each multivariate LAD problem into $r$ smaller one-dimensional LAD subproblems, one for each component $H_{i,j}$ (or $W_{j,i}$) of $H_{:,j}$ (or $W_{j,:})$, with $i=1,\dots,r$ as suggested in~\cite{ke2003robust,guan2012mahnmf,gillis2011dimensionality}. This is a coordinate descent (CD) method for L1-NMF. 

Let us provide more details about the one-dimensional subproblems in L1-NMF, as this will be key in our theoretical analysis and algorithmic design. For simplicity, we focus on the update of the factor $H$.
Each one-variable subproblem consists of minimizing a piece-wise linear function of the form
\begin{equation}
    \argmin_{\alpha\geq 0} f(\alpha)=  \argmin_{\alpha\geq 0} \|x-\alpha y\|_1=\sum_{s=1}^m \lvert x_s -\alpha y_s \rvert=\sum_{s=1}^m \lvert y_s \rvert \left\lvert \frac{x_s}{y_s} -\alpha \right\rvert. 
     \label{eq:gen_scalar_prob}
\end{equation}
We show an example in Figure~\ref{fig:ex_piecewise}. The function in~\eqref{eq:gen_scalar_prob} is a convex, nondifferentiable function, where at least one global minimum coincides with one of the breakpoints $x_s/y_s$. 

\begin{figure}[h]
\centering
\begin{tikzpicture}[scale=0.85]

\draw[->] (-6.5,0) -- (8.5,0) node[below right] {$\alpha$};
\draw[->] (0,0) -- (0,5.5) node[left] {$f(\alpha)$};

\draw[thick, blue]
(-4,5) -- (-2,3) -- (0,1.8) -- (2,1) -- (4,1.6) -- (6,2.4) -- (8,4);

\foreach \x/\lab in {
-4/{$\frac{x_1}{y_1}$},
-2/{$\frac{x_2}{y_2}$},
 0/{$\frac{x_3}{y_3}$},
 2/{$\frac{x_4}{y_4}$},
 4/{$\frac{x_5}{y_5}$},
 6/{$\frac{x_6}{y_6}$},
 8/{$\frac{x_7}{y_7}$}
}{
\draw[fill=blue] (\x,0) circle (0.06);
\node[below] at (\x,0) {\lab};
}

\foreach \x/\y in {
-4/5,
-2/3,
 0/1.8,
 2/1,
 4/1.6,
 6/2.4,
 8/4
}{
\draw[fill=blue] (\x,\y) circle (0.06);
}

\end{tikzpicture}

\caption{Example of a one-dimensional nonnegative least absolute deviation (LAD) problem in~\eqref{eq:gen_scalar_prob}.}
\label{fig:ex_piecewise}
\end{figure}
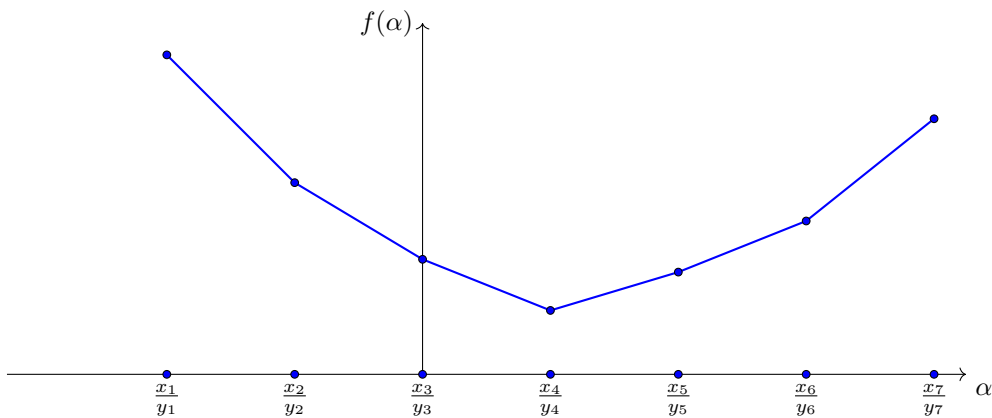

A naive strategy for finding a global minimum of~\eqref{eq:gen_scalar_prob} is to evaluate it at all the breakpoints and take the one that corresponds to the lowest value, or zero if the optimal breakpoint is negative. This approach has a complexity of $\mathcal{O}(m^2)$ and is not used in practice. A faster approach, running in $\mathcal{O}(m\log(m))$ exists, consists of identifying an optimal solution at the breakpoint after which the sign of the slope of the function changes. 
The slopes of the function at every breakpoint are obtained from the vector $y$. The algorithm implementing this strategy is known as the constrained weighted median algorithm~\cite{edgeworth1888xxii}; see Algorithm~\ref{alg:constr_w_m}. 
We also mention that there exists a more involved algorithm for computing a solution to the weighted median problem running in linear time $\mathcal{O}(m)$~\cite{gurwitz1990weighted}.
 \begin{algorithm}
	\caption{\quad \texttt{$\alpha$ = constrained\_weighted\_median(x,y)}}\label{algo_cons_weightedmedian}
	\begin{algorithmic}[1]
		\item INPUT: $x\in \mathbb{R}^{m}$, $y\in \mathbb{R}^{m}$
		\item OUTPUT: $\arg\min_{\alpha\geq 0}\sum_s |x_s-\alpha y_s|$
		\State $S\gets \{\frac{x_s}{y_s}  \ | \ y_s \neq 0\}$
		\State $[S,\text{indices}]\gets \texttt{sort}(S)$ \label{linesort}
		\State $y \gets y(\text{indices})$
		\State $k \gets$ smallest index such that $\sum_{s=1}^k y_s\geq \frac{\sum_{s=1}^m y_s}{2}$ \label{lineindex}
		\State $\alpha\gets\max(0,S_k)$
	\end{algorithmic}
    \label{alg:constr_w_m}
\end{algorithm}

\section{Computational complexity of L1-NMF} 
\label{sec:complexity}

Fro-NMF is NP-hard when $r$ is part of the input~\cite{vavasis2010complexity}, but it can be solved in polynomial time when $r=1$, via the SVD, by combining Eckart-Young and Perron-Frobenius theorems.
Unfortunately, L1-NMF is significantly more complex in this case: we show in this section that L1-NMF is NP-hard even when $r=1$. 

The computational complexity of the best rank-one  approximation in the L1 norm was investigated in \cite{gillis2018complexity}, where it was proved that the unconstrained rank-one problem,  
\begin{equation}\label{eq:l1lrageneral}
\min_{u\in\mathbb{R}^s,v\in\mathbb{R}^t}\ \|M-uv^\top\|_1, 
\end{equation} 
is NP-hard. 
The proof essentially follows a two-step approach. 
It first shows that for any matrix $M\in\{\pm1\}^{s\times t}$, the problem~\eqref{eq:l1lrageneral} admits an optimal solution $(u^*,v^*)$ such that $u^*\in\{\pm1\}^s$ and $v^*\in\{\pm1\}^t$.  
This property is then used to encode an NP-hard problem, namely \textsc{Max-Cut}. 

Unfortunately, the first argument does not directly extend to the nonnegative setting, where the input matrix is nonnegative and the factors are constrained to be nonnegative. 
In fact, consider the binary matrix $X\in\{0,1\}^{4\times 5}$ given by  
\[
X= 
\begin{pmatrix}
1&1&0&1&0\\
0&1&0&1&1\\
0&1&1&1&0\\
1&0&1&1&1
\end{pmatrix}, 
\]
and the corresponding rank-one L1-NMF problem
\[
\min_{w\geq 0,h\geq 0}\ \|X-wh^\top\|_1.
\]
An optimal binary solution can be found by enumeration, 
\[
w^* = (1,1,1,1)^\top,\quad h^*=(0,1,0,1,0)^\top, \quad \text{ with }\|X-w^* {h^*}^\top\|_1 = 7.
\]
However, there exists a nonnegative solution with a smaller error:
$$
w'=\left(1,1,1,\sqrt{\tfrac32}\right)^\top,
h'=\left(\sqrt{\tfrac23},1,\sqrt{\tfrac23},1,\sqrt{\tfrac23}\right)^\top,
 \text{ with }
\|X-w'h'^\top\|_1 \approx 6.899<7.
$$ 
Therefore, in contrast to~\eqref{eq:l1lrageneral}, we cannot assume, without loss of generality, that an optimal solution of rank-one L1-NMF with a binary input matrix is itself binary. 
This counterexample shows that the NP-hardness claim of L1-NMF as a consequence of the NP-hardness of the unconstrained case in~\cite[Theorem 6.13]{gillis2020nonnegative} was made incorrectly. 


Let us now provide a proof that rank-one L1-NMF is NP-hard, even when the input matrix is restricted to be binary, that is, $X\in\{0,1\}^{m\times n}$. 
Our proof proceeds by a polynomial-time reduction from the decision version of the problem~\eqref{eq:l1lrageneral}. 
We begin by formally stating the decision versions of both problems. 
\smallskip

\begin{mdframed}
\noindent
\emph{Problem} L1-LRMA

\noindent
Input: a natural number $D>0$ and a matrix $M\in\{\pm1\}^{s\times t}$. 

\noindent
Question: does there exist $u\in\mathbb{R}^{s}$ and $v\in\mathbb{R}^{t}$ such that $\|M-uv^\top \|_1\leq D$.
\end{mdframed}

\smallskip

\begin{mdframed}
\noindent
\emph{Problem} L1-NMF

\noindent
Input: a natural number $T>0$ and a matrix $X\in\{0,1\}^{m\times n}$. 

\noindent
Question: does there exist $w\in\mathbb{R}_+^{m}$ and $h\in\mathbb{R}_+^{n}$ such that $\|X-wh^\top \|_1\leq T$.
\end{mdframed}

\smallskip

As mentioned above, L1-LRMA is NP-hard by a polynomial-time reduction from \textsc{Max-Cut}~\cite{gillis2018complexity}.
Let us now reduce L1-LRMA to L1-NMF in polynomial-time. 
Given an instance $(M,D)$ of L1-LRMA, we construct an instance $(X,T)$ of L1-NMF as follows: 
\begin{itemize}
    \item $T=st+D$,
    \item $X\in\{0,1\}^{2s\times 2t}$ is defined blockwise: for each $(i,j)\in\{1,\dots,s\}\times\{1,\dots,t\}$,
    \[
    X(2i-1\!:\!2i,\;2j-1\!:\!2j)=
    \begin{cases}
    \begin{pmatrix}
        1 & 0\\
        0 & 1
    \end{pmatrix}, & \text{if } M_{ij}=1,\\[3mm]
    \begin{pmatrix}
        0 & 1\\
        1 & 0
    \end{pmatrix}, & \text{if } M_{ij}=-1.
    \end{cases}
    \]
\end{itemize}
Each entry of $M$ is encoded by a $2\times2$ binary block in $X$. 
The reduction mimics the two possible signs, $\{\pm1\}$, using the two 2-by-2 permutation matrices. 
This construction of $X$ allows us to ``restore'' the property that a binary optimal solution for rank-one L1-NMF exists, which is not true in general (see the example above). 

\begin{theorem}\label{lem:complexityyesyes}
The instance $(X,T)$ is a yes-instance of L1-NMF if and only if $(M,D)$ is a yes-instance of L1-LRMA. 
Hence rank-one L1-NMF is NP-hard. 
\end{theorem}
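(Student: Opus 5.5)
The plan is to prove a \emph{blockwise} identity: for every nonnegative $(w,h)$ the L1-NMF error of $X$ is at least $st$ plus the L1-LRMA error of a rank-one matrix built from $(w,h)$, with equality for a suitable nonnegative lift of any real $(u,v)$. That gives
\[
\min_{w,h\ge 0}\|X-wh^\top\|_1 = st + \min_{u,v}\|M-uv^\top\|_1 ,
\]
and hence the equivalence of the two decision problems. NP-hardness then follows because the construction of $(X,T)$ is clearly polynomial and L1-LRMA is NP-hard~\cite{gillis2018complexity}.

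\textbf{Notation.} Fix $(i,j)$ and write $a=w_{2i-1}$, $b=w_{2i}$, $c=h_{2j-1}$, $d=h_{2j}$, all nonnegative. The $(i,j)$ block of $wh^\top$ is $\begin{pmatrix} ac & ad\\ bc & bd\end{pmatrix}$, so the block error is
\[
E_{ij}=
\begin{cases}
|1-ac|+|1-bd|+ad+bc, & M_{ij}=1,\\
ac+bd+|1-ad|+|1-bc|, & M_{ij}=-1.
\end{cases}
\]
The key observation is that $(ac+bd)-(ad+bc)=(a-b)(c-d)$ factors. So I set $u_i=a-b$ and $v_j=c-d$, which makes $uv^\top$ a genuine rank-one real matrix. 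The goal is $E_{ij}\ge 1+|M_{ij}-u_iv_j|$.

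\textbf{Lower bound (the main obstacle).} I treat $M_{ij}=1$; the case $M_{ij}=-1$ is symmetric, swapping $c\leftrightarrow d$, which flips the sign of $v_j$. Let $z=ac+bd-ad-bc$. There are two cases.
\begin{itemize}
\item If $1-z\ge 0$, the target is $2-ac-bd+ad+bc$. This follows from $1-ac\le|1-ac|$ and $1-bd\le|1-bd|$.
\item If $1-z<0$, the target reduces to $ac+bd\le |1-ac|+|1-bd|+2(ad+bc)$. Here I split on whether $ac$ and $bd$ are above or below $1$, and use the AM--GM bound $ad+bc\ge 2\sqrt{ac\cdot bd}$.
  \begin{itemize}
  \item If both $ac,bd\ge1$, then $\sqrt{ac\,bd}\ge1$ absorbs the constant $2$.
  \item If $ac\ge1>bd$, the inequality needs $bd\le ad+bc$. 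This holds because $2\sqrt{ac\,bd}\ge 2bd$.
  \item If both $ac,bd<1$, then $ac+bd<2$ and the inequality is immediate.
  \end{itemize}
\end{itemize}
This case analysis is where I expect the real work to lie, and I would double-check the boundary cases where some of $a,b,c,d$ vanish. Summing over blocks gives $\|X-wh^\top\|_1\ge st+\|M-uv^\top\|_1$.

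\textbf{Upper bound (lifting).} Given any real $u,v$, I take $w_{2i-1}=u_i^+$, $w_{2i}=u_i^-$, $h_{2j-1}=v_j^+$, $h_{2j}=v_j^-$. In each block only one of $a,b$ and one of $c,d$ is nonzero. Checking the four sign patterns directly gives $E_{ij}=1+|M_{ij}-u_iv_j|$. For instance, with $u_i,v_j\ge0$ and $M_{ij}=1$ one gets $|1-u_iv_j|+1$. With $u_i\ge 0>v_j$ and $M_{ij}=1$ one gets $1+u_i|v_j|+1=1+|1-u_iv_j|$.

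\textbf{Conclusion.} If $\|M-uv^\top\|_1\le D$, the lift gives $\|X-wh^\top\|_1\le st+D=T$. Conversely, if $\|X-wh^\top\|_1\le T$, the lower bound with $u=(a_i-b_i)_i$ and $v=(c_j-d_j)_j$ gives $\|M-uv^\top\|_1\le D$. This proves both directions of Theorem~\ref{lem:complexityyesyes}.
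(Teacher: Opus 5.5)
\textbf{Gap in the lower bound.} The problem is the sub-case ``both $ac,bd<1$'' of your lower bound for $1-z<0$.

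\begin{itemize}
\item There $|1-ac|+|1-bd|=2-ac-bd$, so the inequality you need, $ac+bd\le 2-ac-bd+2(ad+bc)$, is equivalent to $ac+bd-ad-bc\le 1$, i.e.\ $z\le 1$.
\item That is exactly the negation of your case hypothesis $z>1$. So it is not ``immediate'', and the bound $ac+bd<2$ is irrelevant to it.
\item Worse, if this sub-case could occur, the estimate $E_{ij}\ge 1+|1-u_iv_j|$ would be \emph{false} there.
\end{itemize}

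What you actually have to prove is that the sub-case is empty. This needs the same AM--GM coupling you use elsewhere:
\[
z=ac+bd-(ad+bc)\le ac+bd-2\sqrt{ac\,bd}=(\sqrt{ac}-\sqrt{bd})^2<1 \quad\text{whenever } ac,bd\in[0,1).
\]
You also omitted the symmetric sub-case $bd\ge 1>ac$. It follows from $ad+bc\ge 2\sqrt{ac\,bd}\ge 2ac$. With these two repairs your lower bound is correct.

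\textbf{Comparison with the paper.} The rest of your argument is correct, and part of it differs from the paper in a useful way.

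\begin{itemize}
\item \emph{Lifting.} Your lift $w_{2i-1}=u_i^+$, $w_{2i}=u_i^-$, $h_{2j-1}=v_j^+$, $h_{2j}=v_j^-$ gives $E_{ij}=1+|M_{ij}-u_iv_j|$ for \emph{arbitrary} real $(u,v)$. The paper instead invokes the result of \cite{gillis2018complexity} that, for $M\in\{\pm1\}^{s\times t}$, the rank-one L1 problem has an optimal solution with $\pm1$ entries, so that it can use binary $w,h$. You do not need that result, and you get the exact identity of optimal values.
\item \emph{Lower bound.} The paper uses the same $u_i=w_{2i-1}-w_{2i}$ and $v_j=h_{2j-1}-h_{2j}$ but avoids AM--GM entirely. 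It first merges terms by the triangle inequality: $|1-ac|+ad\ge|1-av_j|$ and $|1-bd|+bc\ge|1+bv_j|$. Then, depending on the sign of $v_j$, one of these absolute values can be dropped, and a second triangle inequality yields $1+|1-u_iv_j|$. This route is shorter and has no boundary sub-cases to check.
\end{itemize}
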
 
\begin{proof}
\noindent
\textbf{The \textit{if} part.}
Suppose first that $(M,D)$ is a yes-instance of L1-LRMA. Then there exist $u\in\mathbb{R}^s$ and $v\in\mathbb{R}^t$ such that $\|M-uv^\top \|_1\leq D$.
By~\cite{gillis2018complexity}, we may assume without loss of generality that $u\in\{\pm1\}^s$ and $v\in\{\pm1\}^t$. We define $w\in\mathbb{R}_+^{2s}$ and $h\in\mathbb{R}_+^{2t}$ blockwise as follows:
\[
w_{2i-1:2i}=
\begin{cases}
(1,0)^\top, & \text{if } u_i=1,\\
(0,1)^\top, & \text{if } u_i=-1,
\end{cases}
\qquad
h_{2j-1:2j}=
\begin{cases}
(1,0)^\top, & \text{if } v_j=1,\\
(0,1)^\top, & \text{if } v_j=-1.
\end{cases}
\]
Clearly, $w\geq 0$ and $h\geq 0$. For each $(i,j)$, let $E_{ij}$ denote the error of the $(i,j)$th block in $X$, that is,
\[
E_{ij}=\left\lVert X(2i-1\!:\!2i,\;2j-1\!:\!2j)-w_{2i-1:2i}h_{2j-1:2j}^\top\right\rVert_1. 
\]
By definition, the rank-one block $w_{2i-1:2i}h_{2j-1:2j}^\top$ contains exactly one entry equal to $1$ and three equal to $0$.
There are two possibilities:
\begin{itemize}
    \item if $M_{ij}=u_i v_j$, this entry equal to $1$ is located at an entry where the block
    $X(2i-1:2i,2j-1:2j)$ also has an entry equal to $1$, hence $E_{ij}=1$,
    \item if $M_{ij}\neq u_i v_j$, this entry equal to $1$ is located at an entry where the block
    $X(2i-1:2i,2j-1:2j)$ has an entry equal to $0$, hence $E_{ij}=3$. 
\end{itemize}
Since $M_{ij},u_iv_j\in\{\pm1\}$, we have
\[
|M_{ij}-u_iv_j|=
\begin{cases}
0, & \text{if } M_{ij}=u_iv_j,\\
2, & \text{if } M_{ij}\neq u_iv_j.
\end{cases}
\]
Therefore,
\[
E_{ij}=1+|M_{ij}-u_iv_j| \qquad \text{for all } (i,j).
\]
Summing over all blocks gives
\[
\|X-wh^\top \|_1 = \sum_{i=1}^s\sum_{j=1}^t E_{ij}
= st + \|M-uv^\top \|_1
\leq st + D = T.
\]
Hence, $(X,T)$ is a yes-instance of L1-NMF.

\medskip
\noindent
\textbf{The \textit{only if} part.} Suppose now that $(X,T)$ is a yes-instance of L1-NMF, that is,
there exist $w\in\mathbb{R}_+^{2s}$ and $h\in\mathbb{R}_+^{2t}$ such that $\|X-wh^\top \|_1\le T$.
With $u\in\mathbb{R}^{s}$ and $v\in\mathbb{R}^{t}$ defined by
\[
u_i = w_{2i-1}-w_{2i},\quad \text{for }i=1,\ldots,s, 
\quad \text{ and }\quad
v_j = h_{2j-1}-h_{2j},\quad \text{for }j=1,\ldots,t,
\]
we show that
\[
\lVert M-uv^\top\rVert_1\leq D.
\]

\begin{itemize}
    \item When $M_{ij}=1$, the error of the $(i,j)$th block satisfies
\begin{eqnarray*}
E_{ij} & = & |1-w_{2i-1}h_{2j-1}| + w_{2i-1}h_{2j} + w_{2i}h_{2j-1} + |1-w_{2i}h_{2j}| \\
& \geq & |1-w_{2i-1}h_{2j-1}+w_{2i-1}h_{2j}| + |1-w_{2i}h_{2j}+w_{2i}h_{2j-1}| \\
& = & |1-w_{2i-1}v_j| + |1+w_{2i}v_j|.
\end{eqnarray*}

If $v_j\geq 0$, then $w_{2i}v_j\geq 0$ and we obtain
\[
E_{ij} \geq |1-w_{2i-1}v_j| + 1+w_{2i}v_j \geq  1 + |1-w_{2i-1}v_j+w_{2i}v_j| = 1 + |1-u_iv_j|.
\]
If $v_j\leq 0$, then $-w_{2i-1}v_j\geq 0$ and we obtain
\[
E_{ij} \geq 1-w_{2i-1}v_j + |1+w_{2i}v_j| \geq  1 + |1+w_{2i}v_j-w_{2i-1}v_j| = 1 + |1-u_iv_j|.
\]
Hence, when $M_{ij}=1$, we have $E_{ij} \geq 1 + |M_{ij}-u_i v_j|$.
\smallskip
\item When $M_{ij}=-1$, the error of the $(i,j)$th block satisfies
\begin{eqnarray*}
E_{ij} & = & w_{2i-1}h_{2j-1} + |1-w_{2i-1}h_{2j}| + |1-w_{2i}h_{2j-1}| + w_{2i}h_{2j} \\
& \geq & |1-w_{2i-1}h_{2j}+w_{2i-1}h_{2j-1}| + |1-w_{2i}h_{2j-1}+w_{2i}h_{2j}| \\
& = & |1+w_{2i-1}v_j| + |1-w_{2i}v_j|.
\end{eqnarray*}

Using the same discussion on the sign of $v_j$, we have that
$$E_{ij} \geq 1+|1+u_iv_j| = 1 + |M_{ij}-u_i v_j|.$$
\end{itemize}
Summing over all the blocks leads to
\[
\|X-wh^\top \|_1
= \sum_{i=1}^s\sum_{j=1}^t E_{ij}
\geq  st + \|M-uv^\top \|_1.
\]
Since $\|X-wh^\top \|_1\leq T=st+D$, it follows that $\|M-uv^\top \|_1\leq D$.
Hence, $(M,D)$ is a yes-instance of L1-LRMA.
\end{proof}




\section{Sparsity-inducing property of L1-NMF for sparse data} 
\label{sec:sparsity_inducing}

Any critical point of L1-NMF has to be a coordinate-wise minimizer, that is, every entry of $W$ and $H$ has to be an optimal solution of the corresponding LAD subproblem of the form~\eqref{eq:gen_scalar_prob}.  
Let us then investigate the probability that the one-dimensional nonnegative LAD regression problem~\eqref{eq:gen_scalar_prob} has a zero solution, depending on the sparsity of the data. 
As we will see, the sparser the data, the larger the probability that the LAD problem has a zero solution. This implies that if we compute the L1-NMF of some matrix $X$ using CD (or any other method that computes a coordinate-wise minimizer), the sparser the data, the larger the probability that L1-NMF has sparse factors. 

The LAD problem~\eqref{eq:gen_scalar_prob} has an optimal zero solution if the slope of the function changes sign after a nonpositive breakpoint. Let $x \in \mathbb{R}^m$ and $y \in \mathbb{R}_+^m$ be as in~\eqref{eq:gen_scalar_prob}. 
Since $y\geq0$, a negative breakpoint $x_s / y_s$ corresponds to a negative numerator $x_s$, and we define $\mathcal{N}_{x}=\{s \ | \ x_s\leq0 \} \subseteq \{ 1,\dots,m \}$ as the set of indices corresponding to the negative and zero breakpoints. 
The corresponding LAD problem has a zero solution if
\begin{equation}
    \sum_{s \in \mathcal{N}_{x}} y_s \quad > \sum_{s \in \{1,\dots,m\} \backslash \mathcal{N}_{x}} y_s . 
    \label{eq:sparse_cond}
\end{equation}
If the data vector $x$ is sparse, the set $\mathcal{N}_{x}$ is larger than its complement $\{1,\dots,m\} \backslash \mathcal{N}_{x}$, and the condition in~\eqref{eq:sparse_cond} is more likely to be satisfied. 

We proceed in our analysis by adding some probabilistic structure to $x$ and $y$, assuming they are binary vectors in $\{0,1\}^m$, sampled from a Bernoulli distribution with parameter $p$. Therefore, $x$ and $y$ have a probability $p$ of being positive and $(1-p)$ of being zero. Moreover, we are restricting $x$ to be nonnegative, thus ensuring that all the breakpoints are nonnegative. This is a situation that occurs in rank-1 L1-NMF, where $x$ represents a column of the matrix $X$. Note that when the factorization rank is larger than 1, $x$ is the residual vector, and it might contain negative entries, resulting in an even larger number of nonpositive breakpoints. 
We aim to understand how the probability of having an optimal non-zero solution to the LAD problem and to the least squares problem 
changes with respect to the parameter $p$ that regulates the sparsity in the data. That is, we want to estimate the probability that the optimal solutions 
\begin{equation} \label{eq:l1_l2_comp}
    \alpha_2=\arg\min_{\alpha\geq 0} \|x-\alpha y\|_2, \quad \text{and} \quad \alpha_1=\arg\min_{\alpha\geq 0} \|x-\alpha y\|_1
\end{equation}
have a non-zero value as a function of $p$. Note that $\alpha_1$ and $\alpha_2$ are not necessarily unique; see the proof of Lemma~\ref{lem:explicit_prob}  below for the details. 
This allows us to have a better understanding of the probability of one entry of the L1-NMF and L2-NMF factors to be set to zero. 
The following lemma gives a simple characterization of the probabilities $\mathbb{P}(\alpha_2>0)$ and $\mathbb{P}(\alpha_1>0)$. 
\begin{lemma}
    Let $x, y\in\{0,1\}^m$ be generated from a Bernoulli distribution which takes value 1 with probability $p \in [0,1]$ and value zero with probability $1-p$ 
    as inputs for the problems~\eqref{eq:l1_l2_comp}.  Then  
\begin{equation}
\begin{aligned}
    \mathbb{P}(\alpha_1>0) &  = (1-p)^{m} + \sum_{k=1}^mp^k (1-p)^{m-k}\binom{m}{k}\left(\sum_{i=0}^{\lfloor \frac{k}{2} \rfloor}(1-p)^ip^{k-i}\binom{k}{i}\right).
    \\
    \mathbb{P}(\alpha_2>0) 
     & = (1-p)^{m} + \sum_{k=1}^mp^k (1-p)^{m-k}\binom{m}{k} \left( 1 - (1-p)^k \right).
\end{aligned}
\label{eq:explicit_prob}
\end{equation}
\label{lem:explicit_prob}
\end{lemma}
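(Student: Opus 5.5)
The plan is to condition on the support of $y$ and use independence. Let $K=\{s \mid y_s=1\}$ and $k=|K|$, so $k\sim\mathrm{Bin}(m,p)$ with $\mathbb{P}(k)=\binom{m}{k}p^k(1-p)^{m-k}$. Since the entries of $x$ are independent of $y$, conditionally on $K$ the entries $(x_s)_{s\in K}$ are still i.i.d.\ Bernoulli$(p)$. Both formulas in~\eqref{eq:explicit_prob} then follow from the law of total probability:
\[
\mathbb{P}(\alpha>0)=\sum_{k=0}^m \binom{m}{k}p^k(1-p)^{m-k}\,\mathbb{P}(\alpha>0\mid k).
\]
So it suffices to compute each conditional probability, and, because $\alpha_1,\alpha_2$ need not be unique, to fix a convention for the event $\{\alpha>0\}$.

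\textbf{Case $k=0$.} Here $y=0$ and both objectives in~\eqref{eq:l1_l2_comp} are constant in $\alpha$. Every $\alpha\geq0$ is a minimizer, in particular a positive one. I would adopt the convention that $\{\alpha>0\}$ means ``there exists a positive optimal solution''. This gives conditional probability $1$ and produces the term $(1-p)^m$ in both formulas.

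\textbf{Least squares, $k\geq1$.} The unconstrained minimizer is $\langle x,y\rangle/\|y\|_2^2$, so $\alpha_2=\max\big(0,\langle x,y\rangle/k\big)$, and it is unique. We have $\alpha_2>0$ if and only if $x_s=1$ for some $s\in K$. Its conditional probability is $1-(1-p)^k$, which gives the second formula.

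\textbf{L1, $k\geq1$.} For $\alpha\geq0$,
\[
f(\alpha)=\sum_{s\in K}|x_s-\alpha|+\sum_{s\notin K}x_s .
\]
Let $i$ be the number of indices $s\in K$ with $x_s=0$, so $k-i$ of them have $x_s=1$. On $(0,1)$ the slope of $f$ is $i-(k-i)=2i-k$, and $f$ is increasing for $\alpha>1$. This splits into three cases:
\begin{itemize}
\item If $i<k/2$, the minimizer is $\alpha=1$, which is unique and positive.
\item If $i>k/2$, the unique minimizer is $\alpha=0$.
\item If $i=k/2$ (only possible for $k$ even), $f$ is constant on $[0,1]$. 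Every point of $[0,1]$ is optimal, so under the convention above a positive solution exists.
\end{itemize}
Hence $\alpha_1>0$ if and only if $i\leq\lfloor k/2\rfloor$. Since $i\sim\mathrm{Bin}(k,1-p)$ conditionally on $K$, the conditional probability is $\sum_{i=0}^{\lfloor k/2\rfloor}\binom{k}{i}(1-p)^ip^{k-i}$. This yields the first formula.

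\textbf{Main obstacle.} The computations are routine; the delicate point is non-uniqueness. One must state explicitly that $\mathbb{P}(\alpha>0)$ refers to the existence of a positive optimizer, since this is what matches the formulas in the two tie cases ($k=0$, and $i=k/2$ for L1). It also matches the remark before the lemma that $\alpha_1,\alpha_2$ need not be unique. I would also check that Algorithm~\ref{alg:constr_w_m} picks a positive value in the tie $i=k/2$, making the convention consistent with the algorithm.
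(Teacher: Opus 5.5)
Your proof is correct and follows essentially the paper's argument. You condition on $k=|\supp(y)|$, take as the convention the existence of a positive minimizer in the degenerate cases $y=0$ and $i=k/2$ (the paper does the same), and read off the binomial probabilities.

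One correction to your closing remark: Algorithm~\ref{alg:constr_w_m} actually returns $0$ in the tie $i=k/2$, because the cumulative weight first reaches half the total at the last zero breakpoint. So that check would fail, and the convention must rest on the existence of a positive optimizer, not on the algorithm's output.
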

\begin{proof} 
If $y=0$, then any nonnegative $\alpha_i$ for $i=1,2$ is an optimal solution, so we consider  $\mathbb{P}(\alpha_i>0) = 1$. 
Hence
\[
\mathbb{P}(\alpha_i>0) = 
\mathbb{P}(y=0) 
+ \mathbb{P}( \alpha_i>0 \ | \ y \neq 0). 
\]
Now, let us consider the two cases separately. 
For the L1 norm ($i=1$), if $y$ is not the zero vector, looking at Algorithm~\ref{algo_cons_weightedmedian}, one can check that $\alpha_1>0$ if $\sum_{j \in \supp(y)} x_j \geq \frac{|\supp(y)|}{2}$, that is, $x$ has at least as many 1's as 0's on the support of $y$, defined as the set of indices corresponding to non-zero entries in $y$ and denoted $\supp(y)$. 
Note that when $k =|\supp(y)|$ is even and $x(\supp(y))$ contains exactly $\frac{k}{2}$ zeros, any value $\alpha_1 \in [0,1]$ is optimal, hence we consider it positive with probability 1.
Finally  
\begin{equation}
\begin{aligned}
    \mathbb{P}(\alpha_1>0)=&\mathbb{P}(y=0)+ \mathbb{P}\left(\sum_{j \in \supp(y)} x_j \geq  \frac{|\supp(y)|}{2} \right)\\
    =&\mathbb{P}(y=0) + \sum_{k=1}^m \mathbb{P}\left( \sum_{j \in \supp(y)} x_j \geq \frac{k}{2} \ \middle|  \ |\supp(y)|=k\right) \mathbb{P}\left(|\supp(y)|=k\right).
\end{aligned} 
\label{eq:step1_lem_prob}
\end{equation}
Since $|\supp(y)|$ and $\sum_{j \in \mathcal K} x_j$ for any subset $\mathcal K$ of $\{1,2,\dots,m\}$  
follow a binomial distribution, we obtain~\eqref{eq:explicit_prob} for the L1 norm. 


For the L2 norm ($i=2$), when $y \neq 0$, the optimal solution has a unique closed form expression, $\alpha_2 = \frac{x^\top y}{\|y\|_2^2}>0$. 
Hence $\alpha_2 > 0$  as long as $x(\supp(y)) \neq 0$. We get  
\begin{equation}
\begin{aligned}
    \mathbb{P}(\alpha_2>0)=&\mathbb{P}(|\supp(y)|=0)+ \mathbb{P}\left(\sum_{j \in \supp(y)} x_j \geq 1 \right). 
\end{aligned} 
\label{eq:l2_prob}
\end{equation}
Following the same steps as above, and using the binomial distribution,  we obtain~\eqref{eq:explicit_prob} for the L2 norm.
\end{proof}

It is interesting to analyze the behavior of $\mathcal{P}(\alpha_i >0)$ for $i=1,2$ as the dimension $m$ increases. 
Figure~\ref{fig:prob_alpha} shows $\mathbb{P}(\alpha_1 >0) $ and $\mathbb{P}(\alpha_2 >0) $  as functions of the dimension $m$ for different choices of the  Bernoulli parameter~$p$. 
For $i=1$, we observe that, as the dimension $m$ increases,  
 the probability of the LAD solution being positive (i) goes to zero for  $0\leq p<\frac{1}{2}$, (ii) goes to 1/2 for $p=1/2$, and (iii) goes to one for $p > 1/2$. 
For the least squares problem ($i=2$), 
$\mathbb{P}(\alpha_2 >0)$ goes to 1 as $m$ increases for any $p > 0$.  
In fact, one can prove that, for $p > 0$, since both $|\supp(y)|$ and $\sum_{j\in \supp(y)} x_j$ for a fixed support $\supp(y)$ are binomials, 
\begin{equation}
    \lim_{m \to \infty} \mathbb{P}(\alpha_2>0) = 1 \quad \text{and} \quad \lim_{m \to \infty} \mathbb{P}(\alpha_1>0) = \begin{cases} 1 & \text{ if  } p > \frac{1}{2}\\
\frac{1}{2} & \text{ if  } p = \frac{1}{2}\\
0 & \text{ if  } p < \frac{1}{2}
\end{cases} . 
\label{eq:bin_lem}
\end{equation}

Therefore, for large datasets, FroNMF is likely to generate dense solutions, even for sparse data. Of course, the basis vectors are not generated randomly and will align with the data. However, our result shows that for the randomly generated data matrix $X$ and basis vector $W$, the corresponding FroNMF optimal factor $H$ will be dense with high probability, as long as $m$ is sufficiently large. 
This is somewhat odd: randomly generating basis vectors $W$  will provide dense activations $H$. 
On the contrary, L1-NMF will generate sparse solutions, even when the dimension increases. This will be confirmed in the numerical experiments in Section~\ref{sec:num_res}.

\begin{figure}
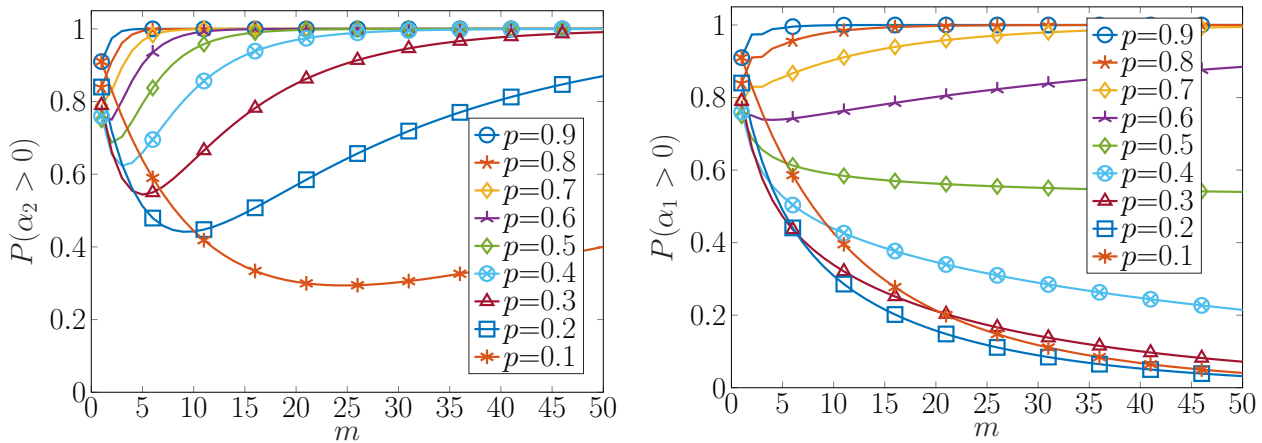

\begin{minipage}[h]{.49\linewidth}
    \centering
       \resizebox{\linewidth}{!}{\input{Tikz/alpha2_p0}}
\end{minipage}
\begin{minipage}[h]{.49\linewidth}
    \centering
    \resizebox{\linewidth}{!}{ \input{Tikz/alpha1_p0}}
\end{minipage}
 \caption{Probability of the least squares solution (on the left) and LAD solution (on the right) in~\eqref{eq:l1_l2_comp} to be greater than zero when each component of the inputs is randomly sampled from a Bernoulli distribution in $\{0,1\}$ with probability $p$ to be 1. } 
    \label{fig:prob_alpha}
\end{figure}

\section{New L1-NMF model: wL1-NMF}
\label{sec:w_ell_1_nmf}
Despite the sparsity-inducing property of L1-NMF being meaningful in several contexts, it might lead to biased factorizations when the zeros in the data contain \textit{false zeros}~\cite{loeys2012analysis, blasco2019does, lopez2023zero}. False zeros appear, for example,  in imaging mass spectrometry where the amount of data is so large that the instruments used for the measurements report only values above some threshold (\textit{peak-picking}), in order to achieve sparse representations~\cite{moens2025preserving,gonzalez2023nectar}. In such applications, a good model that aims to recover the missing information from the observed data should approximate false zeros by non-zero values. At the same time, zeroing out the probability of having a zero value in the low-rank approximation does not take into account that zeros represent small values, thereby losing an important piece of information. 
In word-by-document count data, a zero entry does not necessarily mean that the word is not associated with the topics discussed by the corresponding document. This is especially critical when the documents are short (e.g., tweets). 
Another example is in recommender systems in which   missing values are not random and correspond to a mixture of negative and unknown preference~\cite{wang2018modeling}. Since preferences are typically nonnegative, 
it makes sense to assign zeros to missing entries, meaning a negative opinion on the corresponding items. These entries carry some useful information, namely a potential negative preference: someone did not consume an item because it was not likely that he/she would enjoy it. 


Assuming that zeros cannot be trusted as much as the positive entries but still carry some meaningful information, wL1-NMF, introduced in~\eqref{eq:weight_l1}, 
downweights the importance of the zero entries  using a penalty parameter $\lambda$. Let us recall the formulation: 
\begin{equation}
    \ell(X,W,H,\lambda) := \sum_{(i,j) \in \kappa^+} \lvert X -WH \rvert_{i,j} + \lambda \sum_{(i,j) \in \kappa^0} \lvert WH \rvert_{i,j},
    \label{eq:weight_l1_2}
\end{equation}
where $\lambda \in [0,1]$, $\kappa^+=\left\{ (i,j) \ | \ X_{i,j}>0 \right\}$,  
    and 
    $\kappa^0=\left\{ (i,j) \ | \ X_{i,j}=0 \right\}$.
This corresponds to a low-rank estimator that follows two Laplace distributions with different variances: the first for the positive values and the second for the zeros; see below for more details.  
 
We now give a simple rank-one example, $X \approx wh$, in which fixing $w$ leads to an optimal solution $h=0$ when $\lambda = 1$ (that is, for L1-NMF), while smaller values of $\lambda$ produce a nonzero optimum: let 
\begin{equation*}
    X=\begin{pmatrix}
        1 & 1 & 0 & 0\\
        0 & 0 & 0 & 1 \\
        1 & 0 & 1 & 0\\
        0 & 0 & 1 & 0
    \end{pmatrix},
\end{equation*}
and $w=(1,5/2,1,2)^T$, we have 
\begin{equation}
    \argmin_{h\geq0} \ell(X,w,h,1)= \argmin_{h\geq0} \lVert X - w h \rVert_1 =  (0,0,0,0).
\end{equation}
If we consider smaller values of $\lambda$, we obtain, for example, 
\begin{equation*}
    \argmin_{h \geq 0} \ell(X,w,h,0.85)=(0,0,0.5,0)^\top, \quad \text{ and } \quad
        \argmin_{h \geq 0} \ell(X,w,h,0.4)=(1,0,0.5,0)^\top .
\end{equation*}

\paragraph{Statistical interpretation of wL1-NMF}

The minimizer of the L1-NMF model in~\eqref{eq:l1NMF} is the maximum likelihood estimator when the $(i,j)$th entry of $X$ is the realization of a random variable $\tilde{X}_{i,j}$ that depends on the parameter $(\Hat{W}\Hat{H})_{i,j}$ and is corrupted by additive Laplacian noise, that is, 
\begin{equation}
    \tilde{X}_{i,j} =(\Hat{W}\Hat{H})_{i,j}+\tilde{N}_{i,j}, \quad \text{where } \quad \Hat{W}\geq0, \ \Hat{H}\geq0, \ \text{and} \ \tilde{N}_{i,j} \sim \mathcal{L}(0,\sigma),
    \label{eq:prob_struct}
\end{equation}
where $\tilde{N}_{i,j}$ follows a zero mean Laplace distribution with the diversity $\sigma$.  
Equivalently, 
\begin{equation}
    p_1(\tilde{X}_{i,j};(\Hat{W}\Hat{H})_{i,j},\sigma)=\frac{1}{2 \sigma} e^{-\frac{1}{\sigma}\lvert \tilde{X}_{i,j} - (\Hat{W}\Hat{H})_{i,j} \rvert } \quad \text{ for } i=1,\dots,m \ \text{ and } \ j=1,\dots,n, 
    \label{eq:lab_noise}
\end{equation} 
from which we obtain the likelihood
\begin{equation}
    \ell_1(\Tilde{X};\Hat{W}\Hat{H},\sigma)=  \prod_{i,j } \frac{1}{2 \sigma} e^{-\frac{1}{\sigma}\lvert \Tilde{X}_{i,j} - (\Hat{W}\Hat{H})_{i,j} \rvert }.
    \label{eq:likel}
\end{equation}
Minimizing the negative logarithm of~\eqref{eq:likel} with respect to $W$ and $H$, we obtain estimates $W^*$ and $H^*$ for the unknown parameters $\Hat{W}$ and $\Hat{H}$ by solving the L1-NMF model. 

Being 
 continuous, the Laplace distribution is not designed for situations where the realization of $\tilde{X}$ contains many zero entries. When $X_{ij}$ is zero, we can deduce from~\eqref{eq:prob_struct} that the corresponding entries in the estimate $(W^* H^*)_{ij}$ must be close to the distribution of the noise $\Tilde{N}_{ij}$. Since the Laplace distribution has a sharp peak at zero, this results in a penalization of the magnitude of $(W^* H^*)_{ij}$ as the likelihood in~\eqref{eq:likel} shows. We propose to downweight the penalization effect by rescaling the noise affecting the components in $\kappa^0$, corresponding to zeros in $\Tilde{X}$. The resulting likelihood is 
\begin{equation}
     \ell_1(\Tilde{X};\Hat{W}\Hat{H},\sigma)=\prod_{(i,j)\in \kappa^+ } \frac{1}{2 \sigma} e^{-\frac{1}{\sigma}\lvert \Tilde{X}_{i,j} - (\Hat{W}\Hat{H})_{i,j} \rvert } \prod_{(i,j)\in \kappa^0} \frac{\lambda}{2 \sigma} e^{-\frac{\lambda}{\sigma} (\Hat{W}\Hat{H})_{i,j} }, 
    \label{eq:posteriori}
\end{equation}
where $\lambda$ is a parameter smaller than 1. Taking the negative logarithm and removing the constant terms, we obtain the wL1-NMF objective function~\eqref{eq:weight_l1_2}. Note that a larger diversity means that the Laplace distribution has a less sharp peak at zero. Equivalently, larger diversities, forced by smaller values of $\lambda$, reduce the probability of $(\Hat{W} \Hat{H})_{i,j}$ for $(i,j) \in \kappa^0$ to be zero. If the data at hand are heterogeneous, one might consider using different $\lambda$ parameters for each column (or row) of $\Tilde{X}$.

Moreover, if we cannot trust the zeros or we want to obtain denser solutions, smaller values of $\lambda$ should be used since they correspond to increasing the variability in the prior Laplace distribution of the entries in $\kappa^0$. This theoretical construction can be extended to the case where the entries in $\kappa^+$ and $\kappa^0$ are modeled with different distributions or when prior information is assumed on $\Hat{W}$ and $\Hat{H}$. However, minimizing the corresponding likelihood might become harder. This alternative has already been explored for robust NMF in several contributions; see, for example, \cite{wang2012probabilistic,bayar2014probabilistic}.

\section{Efficient CD algorithm for L1-NMF with sparse data} 
\label{sec:l1_for_sparse}


CD for FroNMF, also known as HALS, is among the most efficient methods for solving NMF problems~\cite{cichocki2007hierarchical, gillis2012accelerated}. In this section, we introduce \name, a novel CD scheme for wL1-NMF~\eqref{eq:weight_l1_2}, that scales linearly with the number of non-zero entries of the original matrix. 
Every iteration of our method is equivalent to an iteration of the standard CD algorithm, but the computational cost is substantially reduced by exploiting the sparsity and nonnegativity in the data. 
As with CD, \name sequentially optimizes over the entries of $W$ and $H$. 
Let us focus on the subproblems for the entries of $H$, the subproblems for $W$ are the same, by symmetry.  
Let 
\begin{equation}
    \kappa^+_j=\left\{ i \ | \ (i,j) \in \kappa^+ \right\} \quad \text{and } \quad \kappa^0_j=\left\{ i \ | \ (i,j) \in \kappa^0 \right\},
    \label{eq:def_set_kappa_q_j}
\end{equation}
be the indices of the positive and zero entries for each column of the matrix $X$, where $\kappa^+$ and $\kappa^0$ are defined after~\eqref{eq:weight_l1_2}. The subproblem for the $j$th column $H_{:,j}$ of $H$ becomes 
\begin{equation}
    \argmin_{H_{:,j}\geq 0} \sum_{s \in \kappa^+_j} \lvert X -WH \rvert_{s,j} + \lambda \sum_{s \in \kappa^0_j} \lvert WH \rvert_{s,j}, \quad \text{for } j=1,\dots,n.
    \label{eq:sub_H_weigh}
\end{equation}
Each subproblem in one column of $H$ is independent; thus, they can be solved in parallel. We further split the problem into $r$ one-dimensional LAD problems, one for each entry $H_{i,j}$ of $H_{:,j}$, for $i=1,\dots,r$, and we obtain 
\begin{equation}
   \argmin_{H_{i,j}\geq 0} \sum_{s \in \kappa^+_j} \lvert X_{s,j}-\sum_{t=1, \ t\neq i}^r (W_{s,t}H_{t,j})-W_{s,i}H_{i,j} \rvert + \lambda H_{i,j}  \sum_{s \in \kappa^0_j} W_{s,i},
\label{eq:sparse_l1_H_explicit}
\end{equation}
that can be simplified to 
\begin{equation}
    \argmin_{H_{i,j}\geq 0} \sum_{s \in \kappa^+_j}\lvert x_s - y_s H_{i,j} \rvert +  \lvert 0 - c H_{i,j} \rvert,
    \label{eq:fin_subprob}
\end{equation}
where 
\begin{equation}
    x_s= X_{s,j}-\sum_{t=1, \ t\neq i}^r (W_{s,t}H_{t,j}), \quad y_s=W_{s,i}, \quad c=\lambda \sum_{s \in \kappa^0_j} W_{s,i}.  
    \label{eq:a_b_c_def}
\end{equation}
Exploiting the nonnegativity of the factors $W$ and $H$ in Equation~\eqref{eq:sparse_l1_H_explicit}, we group the contributions of all the components in $\kappa^0_j$. Therefore, the one-dimensional subproblem in $H_{i,j}$ can be written as the sum of $|\kappa^+_j|+1$ terms, and the constrained weighted median algorithm can be employed to obtain one global minimum in $\mathcal{O}(|\kappa^+_j|\log(|\kappa^+_j|))$ rather than $\mathcal{O}(m\log(m))$ as in the standard CD approach. If $|\kappa^+_j| \ll m$, then this modification consistently reduces the computational cost of each one-variable subproblem while producing the same sequence of iterates as the original CD scheme. 
It is interesting and important to note that it would not be possible to take advantage of the sparsity in the unconstrained case, as the signs of the entries of $WH$ are unknown; hence, one needs to compute all entries of $WH$, leading to a computational cost of $\mathcal O(mnr)$; see also Section~\ref{sec:compcost}.  
In other words, to the best of our knowledge, L1-LRMA cannot be solved with CD with complexity linear in the number of non-zero entries in $X$.

We have seen in Section~\ref{sec:sparsity_inducing} how the sparsity of the data affects the sparsity of the L1-NMF solution. 
Specifically, if the condition in~\eqref{eq:sparse_cond} is satisfied, the nonnegative LAD problem in~\eqref{eq:gen_scalar_prob} has an optimal zero solution. Using the same notation as in~\eqref{eq:a_b_c_def}, and defining $\mathcal{N}_{x}^-=\{s=1,\dots,m \ | \ x_s < 0 \}$, we rewrite the condition in~\eqref{eq:sparse_cond} to have an optimal solution $H_{i,j}=0$ for the subproblem in~\eqref{eq:fin_subprob}, that is, 
\begin{equation*}
    \sum_{s \in {\mathcal{N}}_{x}^-} W_{s,i} + \lambda \sum_{s \in \kappa_{j}^0} W_{s,i} > \frac{1}{2} \sum_{s \in \kappa_{j}^+} W_{s,i} +  \frac{\lambda}{2} \sum_{s \in \kappa_{j}^0} W_{s,i}.
    \label{eq:pre_sparse_cond_weight_pre}
\end{equation*}
Equivalently, we have
\begin{equation}
    \sum_{s \in {\mathcal{N}}_{x}^-} W_{s,i} + \frac{\lambda}{2} \sum_{s \in \kappa_{j}^0} W_{s,i} > \frac{1}{2} \sum_{s \in \kappa_{j}^+} W_{s,i}.
    \label{eq:pre_sparse_cond_weight}
\end{equation}
The smaller the parameter $\lambda$, 
the smaller the number of indices satisfying the condition in~\eqref{eq:pre_sparse_cond_weight}. Consequently, \name\ with $\lambda<1$ produces denser factorizations as $\lambda$ becomes smaller. This will be illustrated in Section~\ref{sec:num_res}.

\subsection{Implementation details} Let us provide some implementation details for \name. 
First, the scalars $x_s$ for  $s=1,\dots,|\kappa^+_j|$ should not be computed using the explicit formula in~\eqref{eq:a_b_c_def}. In fact, let $(W^k,H^k)$ be the $k$th iterate. Defining $v_{s,j} = W^k_{s,:}H^k_{:,j} $ for every $s$ in $\kappa^+_j$ and for $j=1,\dots,n$, 
$x_s$ can be obtained with minimal cost from 
\begin{equation*}
    x_s = X_{s,j} - v_j + W^k_{s,i}H^k_{i,j}, \quad \text{for every } i=1,\dots,r.  
\end{equation*}
Similarly, after computing the new $H_{i,j}^{k+1}$,  $v_j$ is cheaply updated by
\begin{equation*}
    v_{s,j}\gets v_{s,j} + W^k_{s,i}(H_{i,j}^{k+1}-H^k_{i,j}), \quad \text{for every } s \text{ in } \kappa^+_j. 
\end{equation*}
We now give the full scheme for the update of $H$ in Algorithm~\ref{alg:H_update_full}. We use the simplified notation $\kappa$ to denote $\kappa^+_j$, where we omit the lower script $j$ and the super script $+$ for simplicity. 
\begin{algorithm}
	\caption{\quad \texttt{H = updateH($X$,$W^k$,$H^k$,$\lambda$)}}
	\begin{algorithmic}[1]
		\item INPUT: $X\in \mathbb{R}^{m\times n}$, $W^k\in \mathbb{R}^{m\times r}_+$, $H^k\in \mathbb{R}^{r\times n}_+$, $\lambda$
		\item OUTPUT: $H^{k+1}\in \mathbb{R}^{r\times n}_+$
        
		\State $s_i = \sum_{s=1}^{m} W^k_{s,i}$ \quad $\forall i = 1,\dots,r$ \label{line4}
        
         \State err = 0 
         
		\For{$j=1:n$}
		\State $\kappa = \{s \ |\ X_{s,j}>0\}$
		\State $v_j = W^k_{\kappa,:}H^k_{:,j}$ \label{line8}
		\For{$i=1:r$}
		\State $b \gets W^k_{\kappa,i}$ \label{line10}



        
		\State $a \gets X_{\kappa,j} - v_j + W^k_{\kappa,i}H^k_{i,j}$

		\State $c \gets \lambda \left(s_i - \sum_{s\in \kappa} W^k_{s,i} \right)$
		\State $H_{i,j}^{k+1} \gets \texttt{constrained\_weighted\_median}([a\ 0],[b\ c])$
        
		\State $v_j\gets v_j + W^k_{\kappa,i}(H_{i,j}^{k+1}-H^k_{i,j})$ \label{line14}

		\EndFor
                \State err $\gets$ err + $\| X_{\kappa,j} - v_j\|_1 - e^\top v_j$ 

		\EndFor
        \State err $\gets$ err + $(e^\top W)(He)$
	\end{algorithmic}
    \label{alg:H_update_full}
\end{algorithm}

The same algorithm applies to the update of $W^k$ observing that $\lVert X -WH \rVert=\lVert X^\top  -W^\top H^\top  \rVert$, thus Algorithm~\ref{alg:H_update_full} with inputs $X^\top $, $(H^k)^\top $ is employed to get the new updated $(W^{k+1})^\top $. The overall procedure to compute the L1-NMF is given in Algorithm~\ref{alg:sparse_l1_nmf}. 
\begin{algorithm}
	\caption{\quad \texttt{[W,H] = \name(X,W$^0$,H$^0$,$\lambda$,maxiter)}}
	\begin{algorithmic}[1]
		\item INPUT: 
        nonnegative matrix $X\in \mathbb{R}^{m\times n}_+$, 
        the  $W^0 \in \mathbb{R}^{m\times r}$, $H^0 \in \mathbb{R}^{r \times n}$, 
        regularization parameter $\lambda$, 
        maxiter. 
        
		\item OUTPUT: $W\in \mathbb{R}^{m\times r}_+$ and $H\in \mathbb{R}^{r\times n}_+$
		\For{$k=1, \dots,$ maxiter}
        \vspace{0.1cm}
		\State $(W^{k+1})^\top =\texttt{updateH}(X^\top ,(W^k)^\top ,(H^k)^\top,\lambda )$
        \vspace{0.1cm}
		\State $H^{k+1}=\texttt{updateH}(X,W^{k+1},H^k,\lambda)$
		\EndFor
	\end{algorithmic}
    \label{alg:sparse_l1_nmf}
\end{algorithm}

\subsection{Computational cost of \name} 
\label{sec:compcost}


The most expensive operation of \name\ for updating one entry of $H$ in Algorithm~\ref{alg:H_update_full} is the constrained weighted median algorithm, which has complexity $\mathcal{O}(|\kappa^+_j|\log(|\kappa^+_j|))$, where $\kappa^+_j$ is the number of nonzero entries in the $j$th column of $X$. Thus, the update of all entries of $H$ has a total computational cost of $\mathcal{O}(r \sum_{j=1,\dots,n}|\kappa^+_j|\log(|\kappa^+_j|))$, which can be upper bounded by $\mathcal{O}(r \nnz(X) \log(\nnz(X)))$. 
The same reasoning applies for updating $W$. Hence, one iteration of \name\ costs $\mathcal{O}(r \nnz(X) \log(\nnz(X)))$ operations. 
If the original matrix $X$ is sparse, that is,  $\nnz(X) \ll mn$, then \name\ consistently reduces the computational cost of the  naive CD algorithm, which has a complexity of $\mathcal{O}(r mn \log(mn))$ operations. This difference will be illustrated in Section~\ref{sec:num_res}. 
To the best of our knowledge, \name\ is the first L1-NMF algorithm that scales linearly with $\nnz(X) \log(\nnz(X))$. 

\paragraph{Computing the L1-NMF objective} 
It is interesting to note that the L1-NMF objective (and similarly that of wL1-NMF) can be computed in $\mathcal O(\nnz(X)r)$ operations, explaining why it is possible to design algorithms running in a number of operations proportional to $\nnz(X)$. 
In fact, 
\begin{align*} 
\| X - WH \|_1 
= 
\sum_{(i,j) \in \kappa^0} \lvert WH \rvert_{i,j} + 
\sum_{(i,j) \in \kappa^+} \lvert X -WH \rvert_{i,j} 
= 
\|WH\|_1 + \sum_{(i,j) \in \kappa^+} \big( \lvert X -WH \rvert_{i,j} - (WH)_{i,j}   \big), 
\end{align*}
where $\|WH\|_1 = (e^\top W) (H e)$ can be computed in $\mathcal O(r(m+n))$ operations, while the first term needs to evaluate $\nnz(X)$ terms with cost $\mathcal O (r)$ for each term. 
Note that this would not be possible if $W$ and $H$ were unconstrained. 

\subsection{Convergence of \name} 

\name\ decreases the objective function at every iteration, and therefore the sequence of the function values will converge since it is bounded below. 
However, to guarantee the convergence of the sequence of the CD iterates to a coordinatewise stationary point of the nondifferentiable problem in~\eqref{eq:l1NMF}, each of the subproblems in $W$ and $H$ must have a unique global minimizer~\cite{tseng2001convergence}. This is not the case for wL1-NMF since the scalar subproblem for the update of $H_{i,j}^k$ in~\eqref{eq:gen_scalar_prob} might have multiple global minima. Specifically, when two or more breakpoints achieve the lowest function value, any point in the interval between the breakpoints is a global minimum. Therefore, \name\ might converge to a point that is not a coordinate-wise stationary point of wL1-NMF. Moreover, the absence of global convergence guarantees suggests that the initialization of the algorithm plays a fundamental role, as our experience confirms. 

\subsection{\name\ for binary factorizations}  

When $X$ is binary, one may wish to constrain $W$ and $H$ to be binary as well~\cite{zhang2007binary}, this is referred to as binary matrix factorization (BMF).  
Interestingly, if \name\ is initialized with binary factors, $W^0 \in \{0,1\}^{m \times r}$ and $H^0 \in \{0,1\}^{r \times n}$, and the input is binary, $X \in \{0,1\}^{m \times n}$, 
one can show that the iterates of \name\ will remain binary without enforcing it explicitly. The reason is that the break points in the LAD subproblems are 0, 1 or negative. Hence \name\ can be used as a BMF algorithm using the component-wise $\ell_1$ norm as the error measure. Comparing \name\ to existing BMF algorithms is out of the scope of this paper, and left for future works. 

\section{Numerical experiments}
\label{sec:num_res}

Our goal in this section is fourfold: 
\begin{enumerate}

\item[(i)] Show how \name\ allows a significant acceleration of CD for sparse data. 

    \item [(ii)] Show that the wL1-NMF is preferable to other NMF models (namely, FroNMF, KL-NMF, and L21-NMF) in situations where the data is affected by heavy tailed noise or outliers. 

    \item[(iii)] Compare \name\ with the state of the art algorithms for L1-NMF, in particular, 
    the  standard CD~\cite{ke2003robust,ke2005robust,gillis2011dimensionality}, 
    the BCD approach with Nesterov Smoothing (NS) by Guan et al.~\cite{guan2012mahnmf}, 
    and the projected subgradient (SUB) approach by  Rahimi et al.~\cite{rahimi2024projected}, showing the significant  computational advantage brought by \name\ when dealing with sparse data. 
    
\item[(iv)] Show that suitable choices of the penalization parameter $\lambda$ in wL1-NMF lead to significantly better results than L1-NMF ($\lambda = 1$).   
\end{enumerate}

We present numerical experiments on both synthetic and real datasets. In particular, we test our model and algorithm in noisy image compression, topic modeling, and matrix completion in the presence of false zeros.  

\paragraph{Setup} We use HALS to compute FroNMF, MU for KL-NMF following the implementation in~\cite{gillis2020nonnegative}, and MU for L21-NMF from~\cite{kong2011robust}. 
The smoothing parameter in the BCD approach from~\cite{guan2012mahnmf} with Nesterov smoothing is initialized at 0.5, and the number of inner iterations to compute each update of the factor is set to 30; the step size in the subgradient method is selected according to a linearly decaying rule as considered in~\cite{rahimi2024projected}, starting from an initial value of 0.01.

As explained in Section~\ref{sec:sparsity_inducing}, when dealing with sparse data, some initializations of the factors may yield overly sparse L1-NMF. According to our experience, a few iterations of HALS provide a reasonable starting point for computing L1-NMF. Therefore, if not otherwise specified, we initialize the algorithms solving L1-NMF or wL1-NMF  with 10 iterations of HALS. 
 HALS is itself initialized randomly. 
 All the results are averaged over 10 random initializations. Let $X$ be the data matrix and $(W^k,H^k)$ the approximation at step $k$ of any algorithm. If not otherwise specified, we stop each algorithm when the difference in the relative error, $\lVert X - W^k H^k \rVert_1 / \lVert X \rVert_1$, after two consecutive iterations, is below a tolerance of $10^{-6}$, or when a fixed time limit is reached. 
 The time limit is set according to the application being considered. Since the projected subgradient method is non-monotonic in decreasing the objective function, the only stopping criterion considered is the time limit. 

The algorithms are implemented in Julia version 1.11.4 (2025-03-10) on a 64-bit Samsung/Galaxy with 11th Gen Intel(R) Core(TM) i5 1135G7 @ 2.40GHz and 8 GB of RAM, under Windows 11 version 23H2. The code is available online from \url{https://github.com/giovanniseraghiti/wL1-NMF}.


\subsection{Synthetic data} 
We first show the computational advantage of using \name\ with respect to the original version of CD for L1-NMF on synthetic data. 
We generate different sparse $m\times n$ matrices with dimensions $(m,n)=(100,200),$ $(300, 400),$ $(500, 600), $ $(800, 1000)$ whose entries follow a uniform distribution in $[0,1]$. 
Then, we sparsify each matrix by randomly setting some of its entries to zero. 
We fix the rank to $r=20$ for all cases and run both algorithms for 30 iterations, ignoring any other stopping criteria. We specify that, in this first experiment, we are not interested in the quality of the approximations.  We focus on comparing the computational efficiency of the two versions of CD, which perform the same updates but in a different way. 
Table~\ref{tab:synt_comp} reports the average CPU time of \name\ and standard CD. 
It also reports the theoretical factor of gain of using \name\ over standard CD, which is the ratio between the asymptotic complexities of the two algorithms, that is, \begin{equation}
     \sigma=\frac{mn\log(mn)}{\nnz(X)\log(\nnz(X))}.
     \label{eq:comp_fact}
\end{equation}
The quantity $\sigma$ is the theoretical number of iterations that \name\ can perform in the time required for one standard CD iteration. 
We compare the theoretical factor of gain, $\sigma$, with the practical gain, which is the ratio between the average CPU times of CD and \name \ (last column of Table~\ref{tab:synt_comp}). 
\begin{table}[h!]
\centering
\begin{tabular}{ c || c | c | c | c | c  } 
 \toprule 
$m \times n$ & sparsity  & \name\  &  CD & $\sigma$ &  gain \\
  \midrule
     $100 \times 200$  \multirow{5}{*}{}& $25 \%$  & 0.31  & 0.40 & 1.37 & 1.29 \\
 &   $50 \%$  & 0.20 & 0.40 & 2.15 & 2.0 \\
  &  $80 \%$   & 0.08 &  0.39 & 5.97 & 4.88\\
  
   \hline
 $300 \times 400$  \multirow{5}{*}{}& $25 \%$  & 2.20   & 2.94 &1.37 & 1.33 \\
 &   $50 \%$  & 1.59 & 3.05 & 2.13 &  1.92\\
  &  $80 \%$   & 0.61 &  2.67 &5.80 & 4.38\\
  
   \hline
 $500 \times 600$  \multirow{5}{*}{}& $25 \%$  &  6.40  &  8.56 & 1.36 & 1.34 \\
 &   $50 \%$  & 4.34 &  8.85 & 2.12 & 2.04 \\
  &  $80 \%$   &  1.70 &  8.16 & 5.73 & 4.8 \\

   \hline
 $800 \times 1000$  \multirow{5}{*}{}& $25 \%$  &  20.53  &  27.68 & 1.36 & 1.35 \\
 &   $50 \%$  & 14.08 &  30.02 & 2.11 & 2.13 \\
  &  $80 \%$   & 4.75 &  27.65 & 5.67 & 5.82\\
 
 \bottomrule 
\end{tabular}
\caption{Comparison between \name\ and the original CD for L1-NMF on randomly generated data for different dimensions ($m \times n$) and sparsity of the input matrix. 
The third and fourth columns report the average CPU time per iteration for \name\ and CD, respectively, in seconds.  
The fifth and sixth column report the theoretical gain, $\sigma$ in~\eqref{eq:comp_fact}, and the practical  gain, resp. }   
\label{tab:synt_comp}
\end{table}

Table~\ref{tab:synt_comp} illustrates that \name\ follows closely the theoretically expected gain compared to the standard CD. 
In fact, when the sparsity increases from $25\%$ to $80\%$, the computational time decreases  approximately by a factor of four. 
On the contrary, the standard CD algorithm has a complexity that does not depend on the sparsity of the data, resulting in similar CPU times per iteration for each level of sparsity.

\subsection{MNIST dataset} The MNIST dataset consists of a collection of images of 10 handwritten digits; each sample is a $28 \times 28$ gray and sparse  image. We consider $300$ randomly selected digit images, resulting in a matrix $X$ of dimensions $784 \times 300$, where each column of the matrix contains the vectorized image of one digit. 
Then, we add sparse noise to the data by randomly flipping each white pixel to black and each gray pixel to white with a probability $p$, thereby obtaining a noisy version of the dataset $\Bar{X}$.
For this instance, we fix the regularization parameter $\lambda$ in the wL1-NMF to 1, that is, L1-NMF. We compare different NMF models in Section~\ref{sub:mnist_NMF_gen}, and we focus on the performance of the algorithms solving L1-NMF in Section~\ref{sub:mnist_l1NMF}.  We stop each algorithm after 90 seconds or if the difference in the relative error between two consecutive iterations is smaller than $10^{-6}$. 
The rank of the factorization is fixed to $r=50$.

\subsubsection{Comparison of NMF models}
\label{sub:mnist_NMF_gen}
Given $(W^*,H^*)$, an NMF computed by an algorithm, we measure the quality of the factorization using the relative error in Frobenius norm with the ground truth data, that is, $\lVert X-W^*H^* \rVert_F/\lVert X \rVert_F$. We specify that it is beyond the purposes of this work to compare our method to standard denoising techniques, which, in general, do not require the reconstruction to be low-rank.
Figure~\ref{fig:comp_NMF_models_MNIST} displays the relative error for various NMF models, 
for different values of the noise probability $p$. It shows that when the noise level is small ($p \leq 0.04$), KL-NMF, L2-NMF, and L21-NMF provide lower relative errors than L1-NMF, although comparable. 
However, as the noise level increases ($p \geq 0.08$), L1-NMF achieves the best performance. 
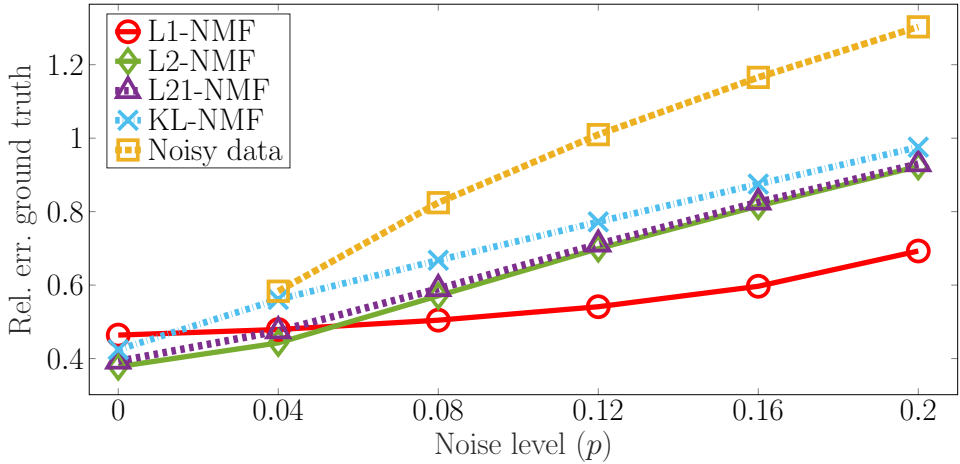
\begin{figure}
    \centering
    \resizebox{0.75\linewidth}{!}{ 
%
%
\definecolor{mycolor1}{rgb}{0.46600,0.67400,0.18800}%
\definecolor{mycolor2}{rgb}{0.49400,0.18400,0.55600}%
\definecolor{mycolor3}{rgb}{0.30100,0.74500,0.93300}%
\definecolor{mycolor4}{rgb}{0.93000,0.69000,0.13000}%
\begin{tikzpicture}

\begin{axis}[%
width=8in,
height=3.6in,
at={(0.733in,0.717in)},
scale only axis,
unbounded coords=jump,
font=\huge,
scaled y ticks=false,
    ticklabel style={
        /pgf/number format/fixed,
        /pgf/number format/precision=2
    },
xmin=-0.00725388395662747,
xmax=0.209844561639227,
xtick={   0, 0.04, 0.08, 0.12, 0.16,  0.2},
xlabel style={font=\color{white!15!black}},
xlabel={\huge Noise level ($p$)},
ymin=0.301647047123488,
ymax=1.3647058706529,
ylabel style={font=\color{white!15!black}},
ylabel={\huge Rel.\ err.\ ground truth},
axis background/.style={fill=white},
legend style={at={(0.02,0.58)}, anchor=south west, legend cell align=left, align=left, draw=white!15!black}
]
\addplot [color=red, line width=3.5pt, mark size=7pt, mark=o, mark options={solid, red,,line width=2.5pt}]
  table[row sep=crcr]{%
0	0.463938988879626\\
0.04	0.479124905191728\\
0.08	0.504456862974566\\
0.12	0.541133928421559\\
0.16	0.596806292046503\\
0.2	0.692836597284415\\
};
\addlegendentry{L1-NMF}

\addplot [color=mycolor1, line width=3.5pt, mark size=8pt, mark=diamond, mark options={solid, mycolor1,line width=2.5pt}]
  table[row sep=crcr]{%
0	0.377894274083328\\
0.04	0.442572833249210\\
0.08	0.570810121703690\\
0.12	0.699246833770925\\
0.16	0.815292021664598\\
0.2	0.923262049562748\\
};
\addlegendentry{L2-NMF}

\addplot [color=mycolor2, dashed, line width=5pt, mark size=8.5pt, mark=triangle, mark options={solid, mycolor2,line width=2.5pt}]
  table[row sep=crcr]{%
0	0.392533845650144\\
0.04	0.474994630721814\\
0.08	0.589967425719929\\
0.12	0.711260715970360\\
0.16	0.825521971753015\\
0.2	0.929886181700145\\
};
\addlegendentry{L21-NMF}

\addplot [color=mycolor3, dashdotted, line width=4.5pt, mark size=9.0pt, mark=x, mark options={solid, mycolor3,line width=2.5pt}]
  table[row sep=crcr]{%
0	0.424382181850498\\
0.04	0.561054740555926\\
0.08	0.667690659045612\\
0.12	0.772247074693669\\
0.16	0.875105179094526\\
0.2	0.975588202360953\\
};
\addlegendentry{KL-NMF}

\addplot [color=mycolor4, dotted, line width=4.5pt, mark size=6.5pt, mark=square, mark options={solid, mycolor4,line width=2.5pt}]
  table[row sep=crcr]{%
0	nan\\
0.04	0.583370689260315\\
0.08	0.824307759938737\\
0.12	1.00964329400092\\
0.16	1.16548022699053\\
0.2	1.30310127023990\\
};
\addlegendentry{Noisy data}

\end{axis}

\begin{axis}[%
width=5.833in,
height=4.375in,
at={(0in,0in)},
scale only axis,
xmin=0,
xmax=1,
ymin=0,
ymax=1,
axis line style={draw=none},
ticks=none,
axis x line*=bottom,
axis y line*=left
]
\end{axis}
\end{tikzpicture}%
}
 \caption{Comparison of the robustness of various NMF models on the MNIST dataset depending on the noise level. Relative error w.r.t.\ ground truth. Noisy data reports $\lVert X- \bar X \rVert_F/\lVert X \rVert_F$. 
 }
    \label{fig:comp_NMF_models_MNIST}
\end{figure}

This result is confirmed by the visual representations in Figure~\ref{fig:visual_rec}, which show examples of reconstructed digits, which are the columns of the NMFs $W^*H^*$ of $\Bar{X}$, reshaped as $28\times 28$ images.  
 L1-NMF is the only model able to recover the white background of the digits and provides much more accurate approximations, especially for higher noise levels.  
\begin{figure}[ht!]
\begin{center}
 \begin{minipage}[h]{.329\linewidth}
    \includegraphics[height=2.4cm]{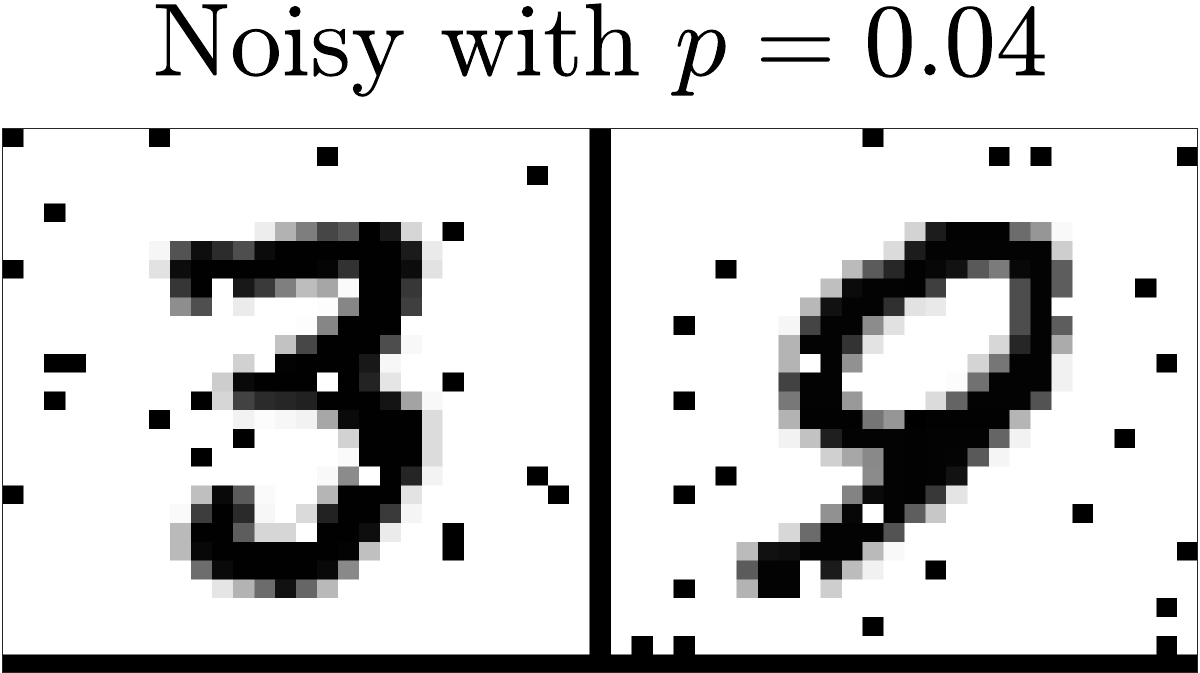}
\end{minipage}
\begin{minipage}[h]{.329\linewidth}
    \includegraphics[height=2.4cm]{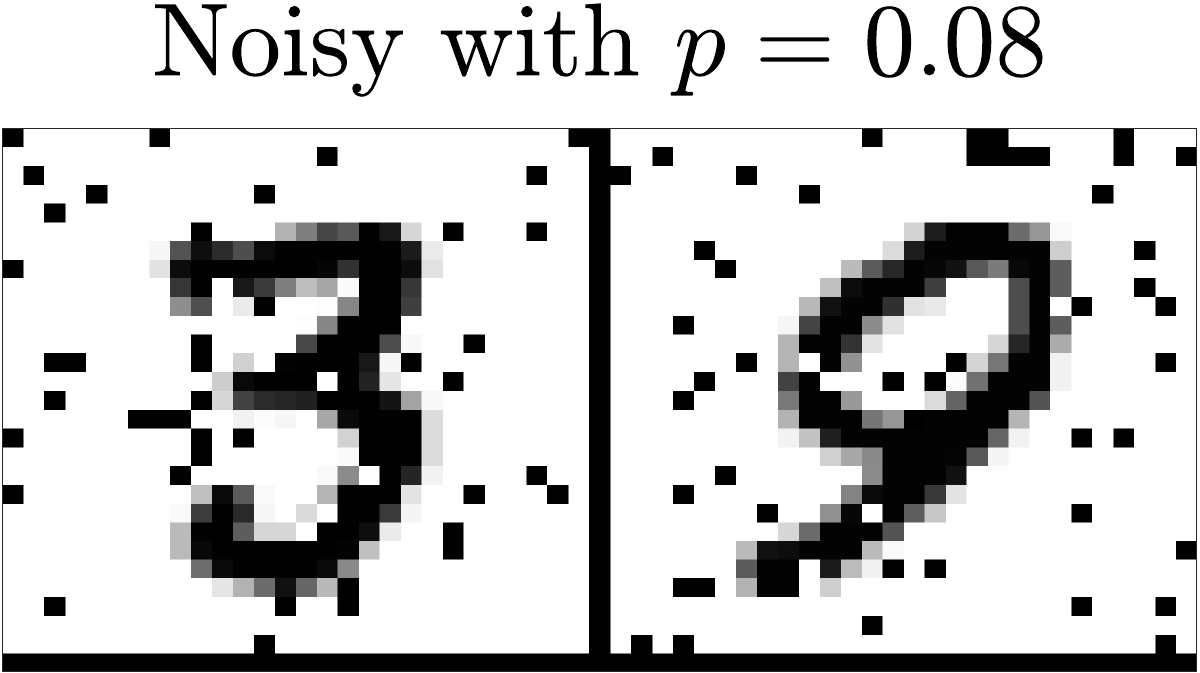}
\end{minipage}
\begin{minipage}[h]{.329\linewidth}
    \includegraphics[height=2.4cm]{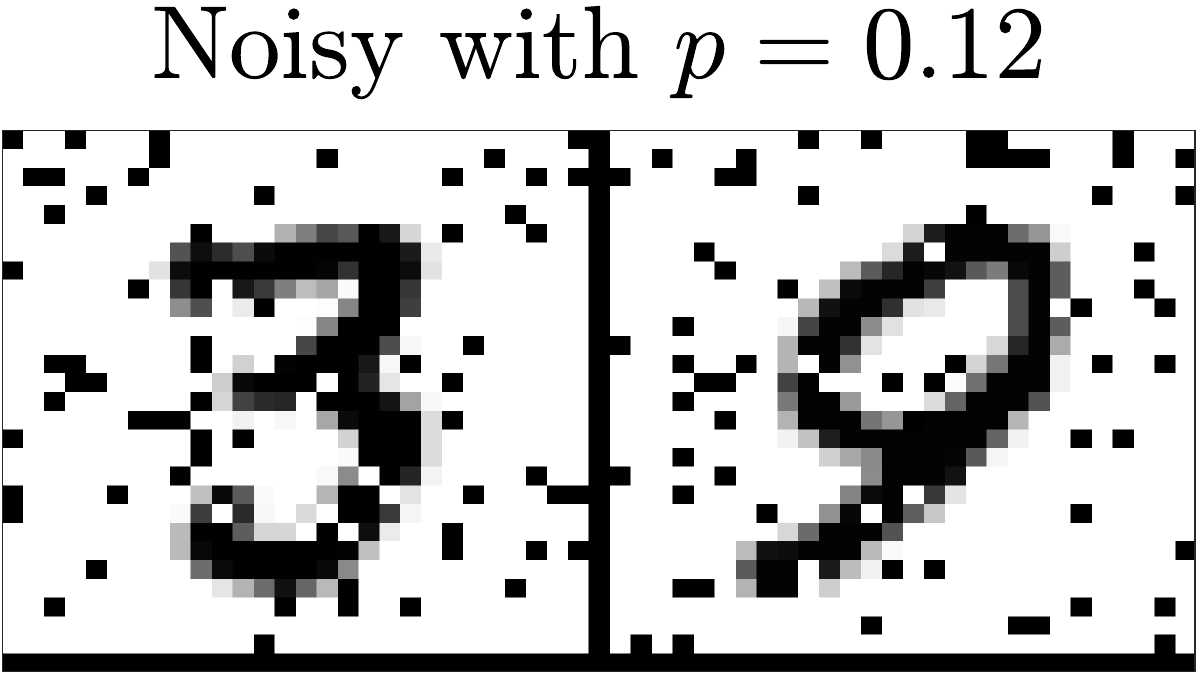}
\end{minipage}
\vspace{0.5cm}

 \begin{minipage}[h]{.329\linewidth}
    \includegraphics[height=2.4cm]{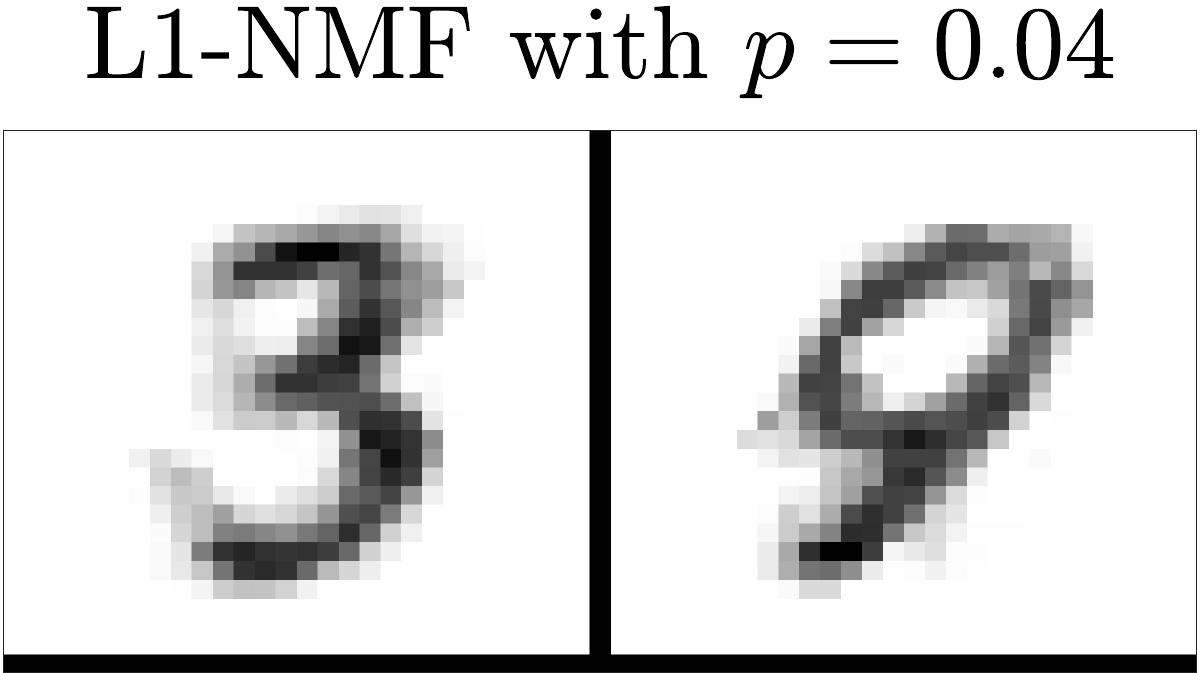}
\end{minipage}
\begin{minipage}[h]{.329\linewidth}
    \includegraphics[height=2.4cm]{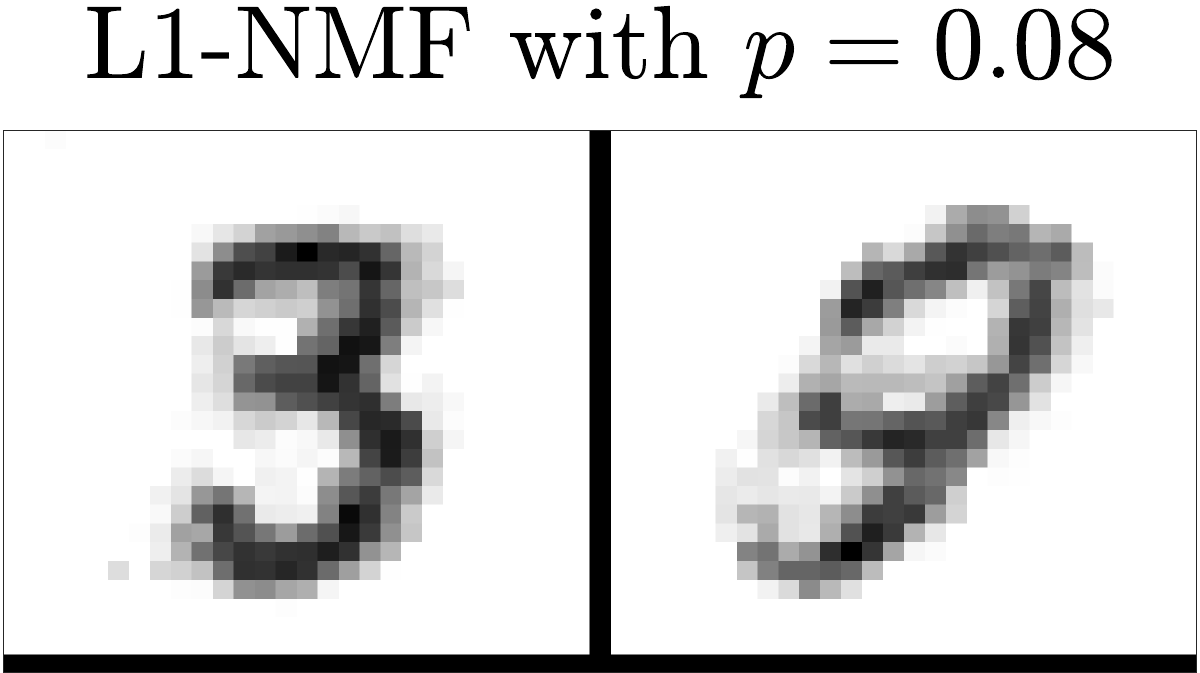}
\end{minipage}
\begin{minipage}[h]{.329\linewidth}
    \includegraphics[height=2.4cm]{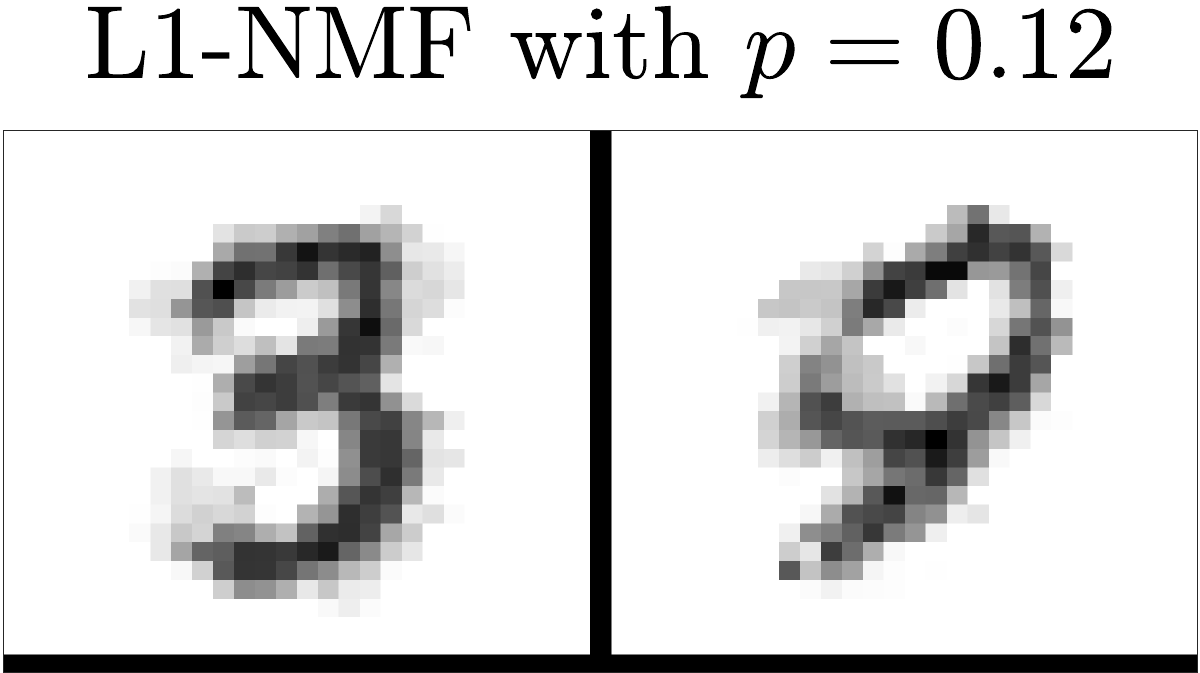}
\end{minipage}
\vspace{0.5cm}

  \begin{minipage}[h]{.329\linewidth}
    \includegraphics[height=2.4cm]{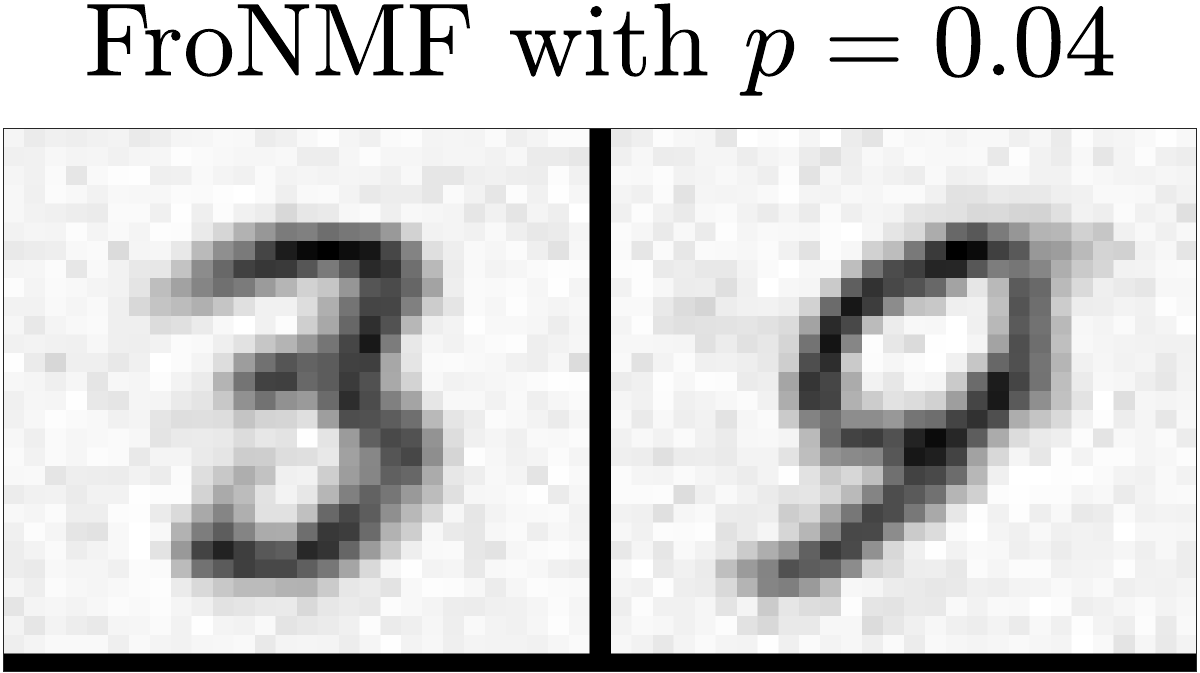}
\end{minipage}
\begin{minipage}[h]{.329\linewidth}
    \includegraphics[height=2.4cm]{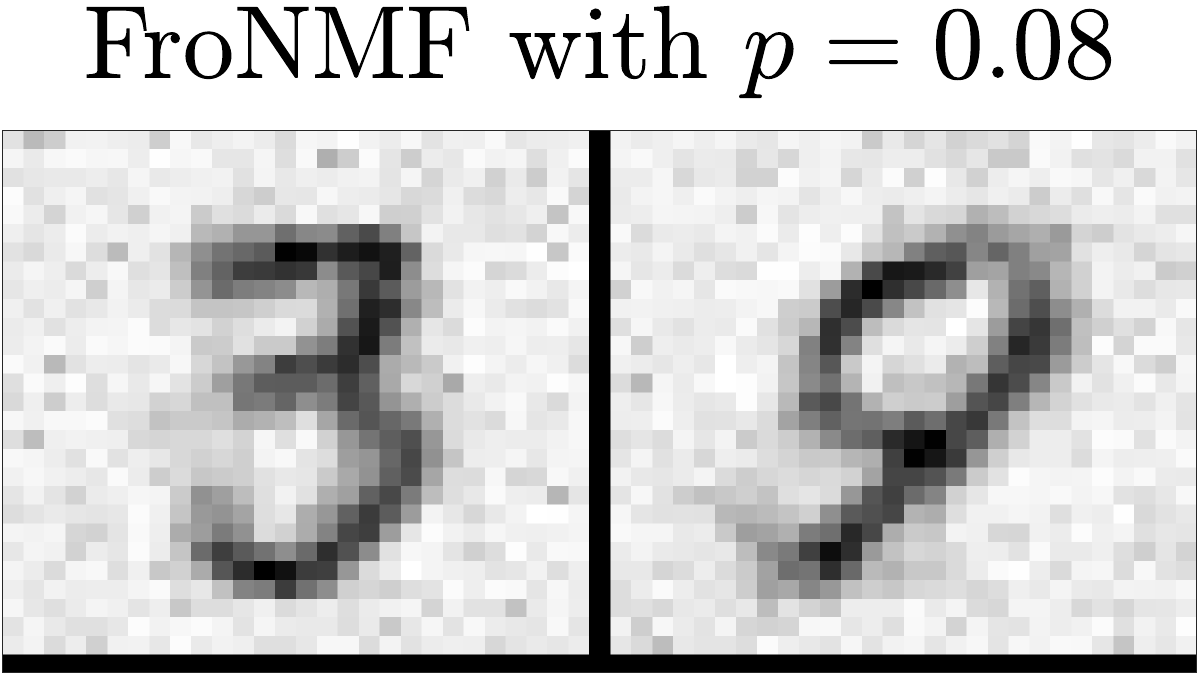}
\end{minipage}
\begin{minipage}[h]{.329\linewidth}
    \includegraphics[height=2.4cm]{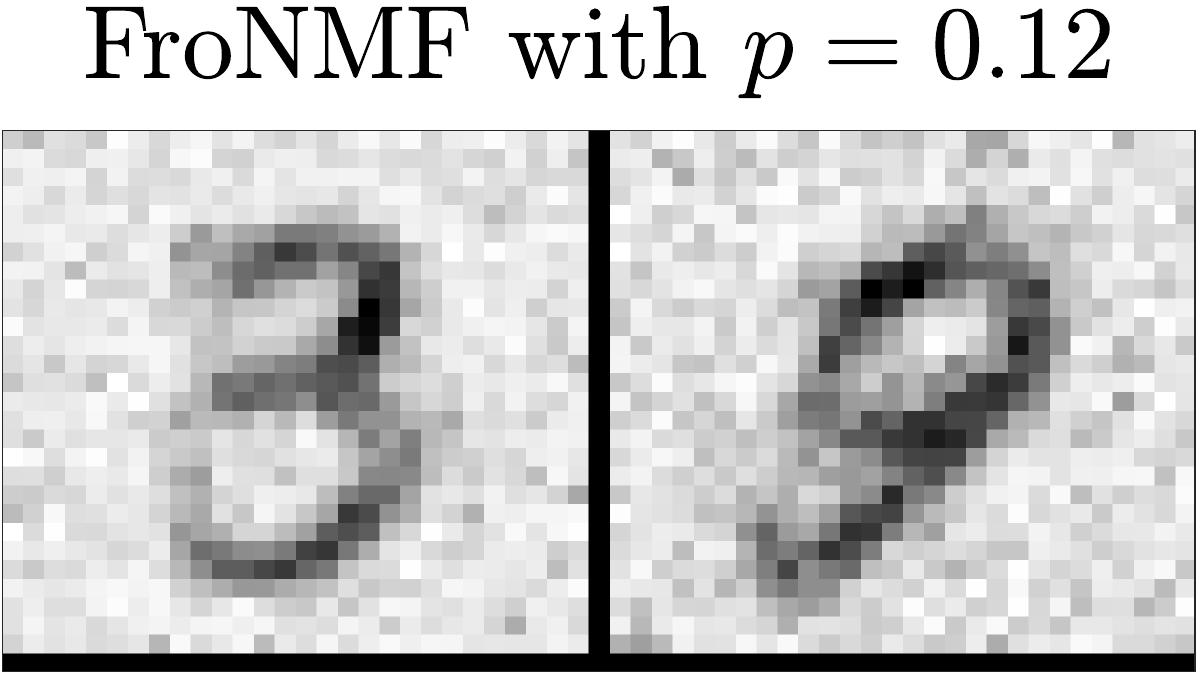}
\end{minipage}
\vspace{0.5cm}

 \begin{minipage}[h]{.329\linewidth}
    \includegraphics[height=2.4cm]{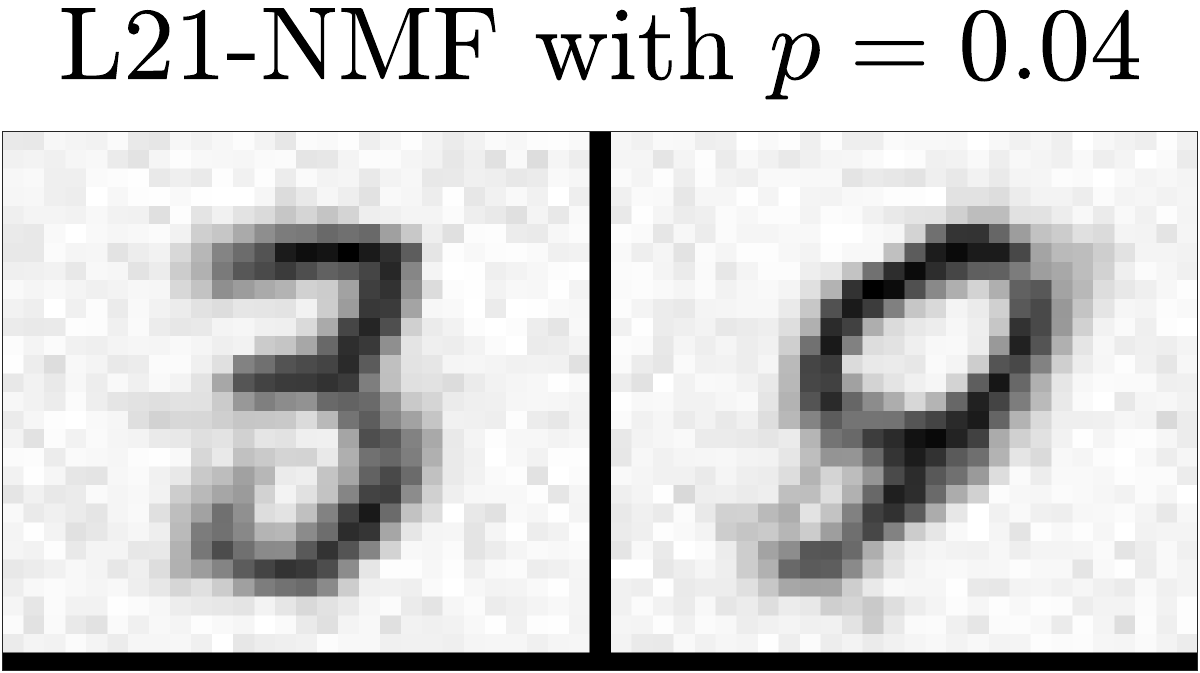}
\end{minipage}
\begin{minipage}[h]{.329\linewidth}
    \includegraphics[height=2.4cm]{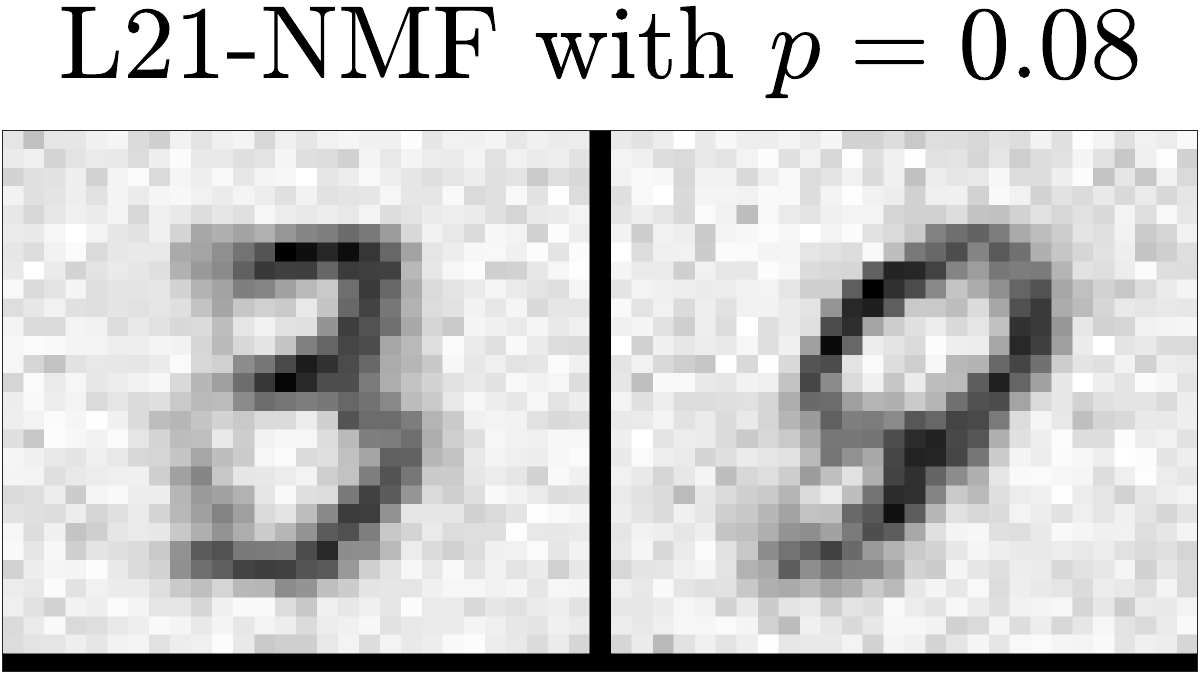}
\end{minipage}
\begin{minipage}[h]{.329\linewidth}
    \includegraphics[height=2.4cm]{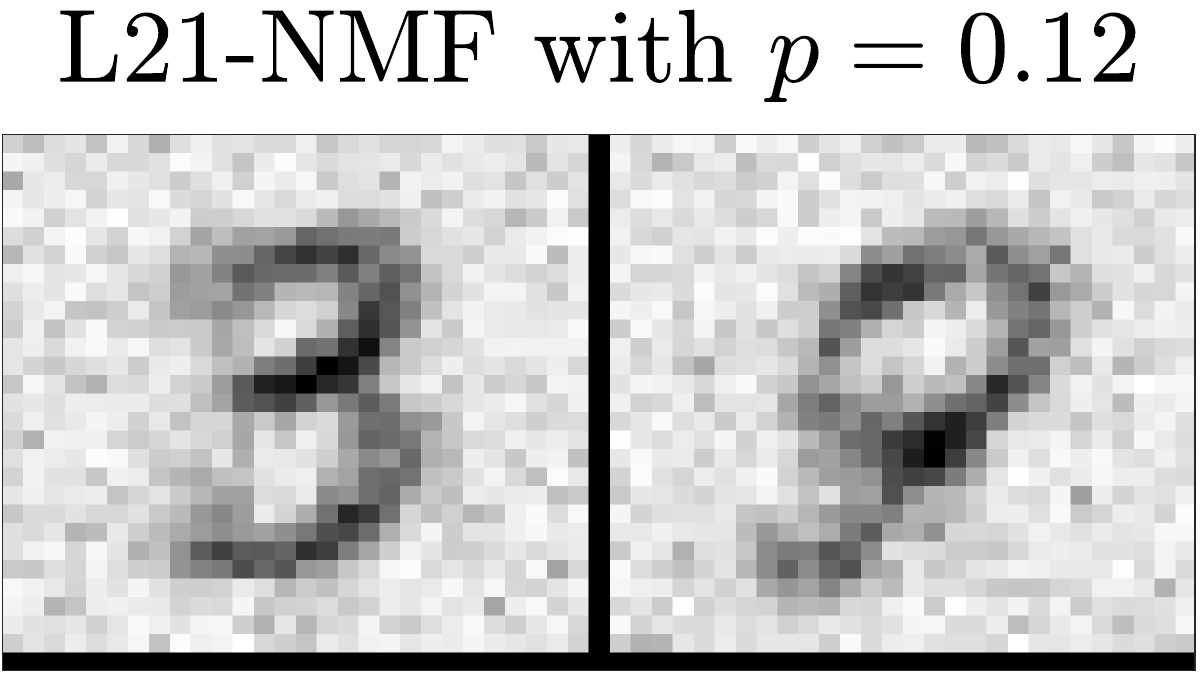}
\end{minipage}
\vspace{0.5cm}

 \begin{minipage}[h]{.329\linewidth}
    \includegraphics[height=2.4cm]{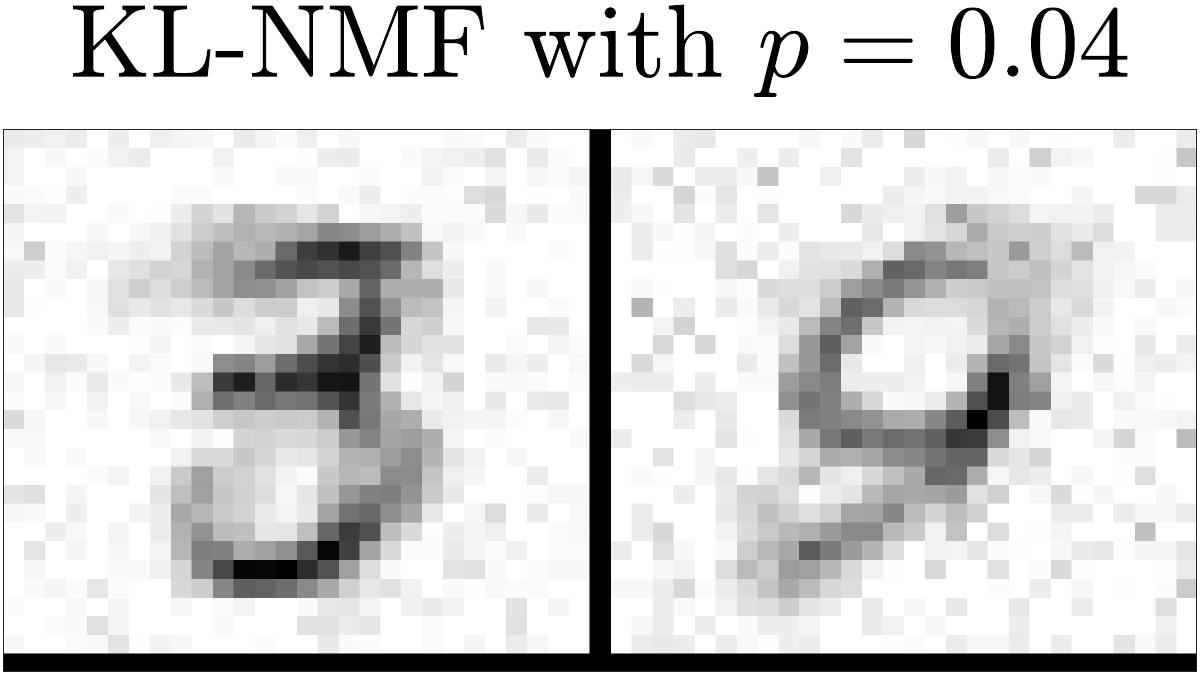}
\end{minipage}
\begin{minipage}[h]{.329\linewidth}
    \includegraphics[height=2.4cm]{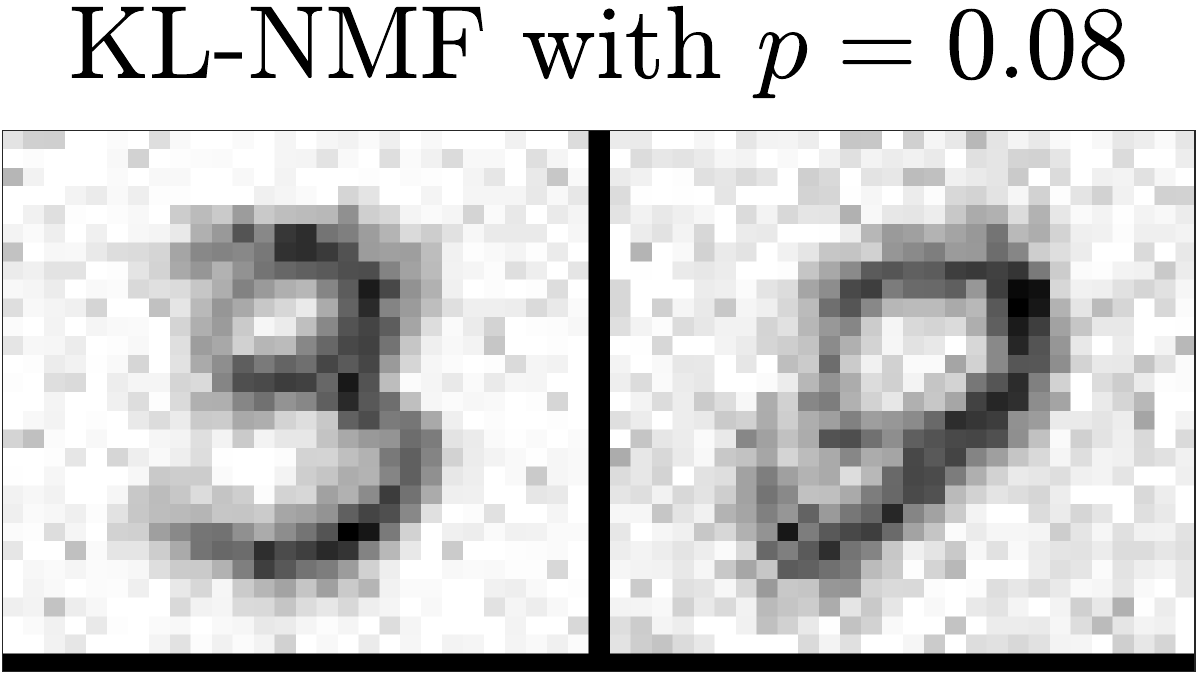}
\end{minipage}
\begin{minipage}[h]{.329\linewidth}
    \includegraphics[height=2.4cm]{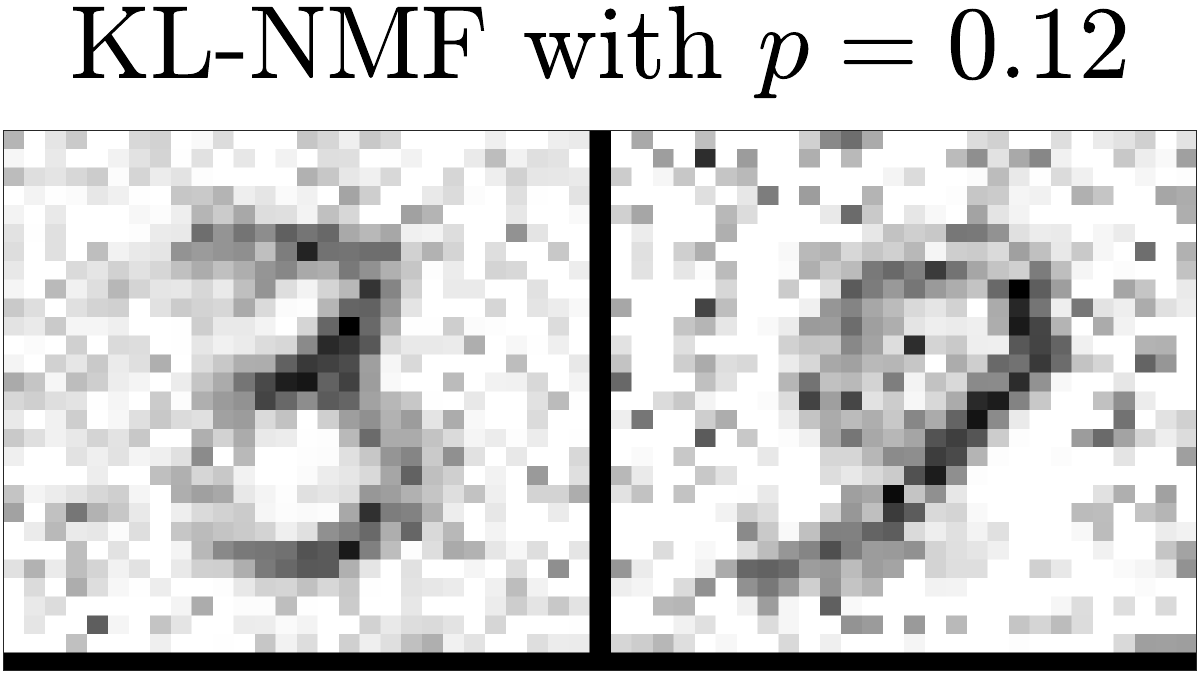}
\end{minipage}
\end{center}
\caption{Examples of the low-rank representations of digits using different NMF models on the MNIST dataset.}
\label{fig:visual_rec}
\end{figure}

\subsubsection{Comparison of L1-NMF algorithms}
\label{sub:mnist_l1NMF}
We now investigate the performance of different algorithms on the same MNIST dataset to solve L1-NMF in terms of final relative error, $\lVert \Bar{X} - W^* H^* \rVert_1 / \lVert \Bar{X} \rVert_1$. 
Table~\ref{tab:L1_MNIST_comp} reports the relative errors and the number of iterations performed by the different algorithms.  
 \begin{table}[h!]
\centering
\begin{tabular}{ c || c | c | c | c | c || c | c | c } 
 \toprule 
Noise &  & NS \cite{cao2023manhattan} & SUB \cite{rahimi2024projected} &  \name\ (Alg.~\ref{alg:sparse_l1_nmf})  &  CD \cite{ke2003robust}   & sparsity & $\sigma$ & gain  \\
  \midrule
     $0\%$   \multirow{5}{*}{}& res &   0.453 & 0.448 & \textbf{0.428}  & 0.437   & 81$\%$ & 6.2 & 5.8\\
  &  iter   & 183 & $7.0 \cdot 10^3$ & 29 &  5 & & &\\
   \hline
  $4\%$  \multirow{5}{*}{}& res  & 0.60 & 0.593 & \textbf{0.572} & 0.581 & 79$\%$ & 5.4 & 4.8\\
  
   &   iter   & 177 & $6.8 \cdot 10^3$ & 24 &  5 & & & \\
   \hline
  $8\%$   \multirow{5}{*}{}& res  & 0.710& 0.703 &  \textbf{0.675}  & 0.685 & 76$\%$ & 4.8 & 4.4\\

   & iter   & 178 & $7.0 \cdot 10^3$ & 22 &  5  & & & \\

  \hline
 $12\%$  \multirow{5}{*}{}& res  & 0.789& 0.783 &  \textbf{0.750}  & 0.760 & 74$\%$ & 4.3 & 4.0 \\
 
  &   iter   & 166 & $6.8 \cdot 10^3$ & 20 &  5 & & &\\

   \hline
  $16\%$  \multirow{5}{*}{}& res  & 0.840& 0.837 &  \textbf{0.804}  & 0.813 & 71$\%$ & 3.9 & 3.8\\
 
  &   iter   & 176 & $6.7 \cdot 10^3$ & 19 &  5 & &  &\\ 
 \bottomrule 
\end{tabular}
\caption{Comparison of L1-NMF algorithms on the MNIST data with different levels of noise. We report the sparsity of the noisy data (sparsity), the theoretical factor of compression of \name\ over CD ($\sigma$ defined in~\eqref{eq:comp_fact}), the practical gain (gain) of \name\ over CD, the final relative residual (res), and number of iterations performed (iter).}  
\label{tab:L1_MNIST_comp}
\end{table} 
 \name\ achieves the lowest residual for every level of noise, followed by the original CD algorithm, which is approximately 3 to 5 times slower (depending on the sparsity of the matrix $\bar X$, which depends on $p$). 
  NS and SUB have a consistently smaller computational cost per iteration; however, they cannot reach the same accuracy as the CD-based  approaches.

\subsection{Matrix completion with false zeros}

Let us now consider a nonnegative low-rank matrix completion problem where the missing entries correspond to zeros in the observed matrix. We are interested in a setting where some missing values are due to anomalies or errors in the measurements, e.g., as in imaging mass spectrometry~\cite{moens2025preserving,gonzalez2023nectar}, see Section~\ref{sec:w_ell_1_nmf}. 
We generate synthetic data where the missing values are of two types: missing completely at random or missing because they are below a fixed threshold.   
This represents a setting where some of the zeros are \textit{false zeros}, corresponding to low-intensity values. 
We aim to show that, in this application,  wL1-NMF performs better than the standard L1-NMF and low-rank nonnegative matrix completion, corresponding to the choices $\lambda=1$ and $\lambda=0$, respectively. 

Specifically, we generate two matrices $\hat{W} \in \mathbb{R}_+^{m \times r} $ and $\hat{H} \in \mathbb{R}_+^{r \times n}$ with $m=100$, $n=50$, and $r=20$ from a uniform distribution in $(0,1)$, and we form the ground truth low-rank matrix $\hat{W} \hat{H}$, which is dense. 
We then add a noise matrix $N$ to the ground truth data, whose entries follow a zero mean Laplace distribution with diversity $\sigma=0.1$, to obtain the noisy matrix ${X}=\Hat{W} \Hat{H} + N$. 
Finally, we replace a proportion $q_1 \in [0,1)$ of the entries of $X$ with zero.
We consider two types of zeros: a fraction $q_2$ of type I and a fraction $1-q_2$ of type II, with $q_2 \in [0,1]$: 
\begin{enumerate}
    \item type I sets the smallest entries of $X$ to zero, for a total of $q_1 q_2 mn$ entries. These entries can be seen as small entries thresholded to zero. We call them false zeros. 

    \item type II sets randomly selected entries of $X$ to zero, for a total of $q_1 (1-q_2) mn$ entries. We call them missing entries, as they do not bring any useful information since they were picked at random.  
\end{enumerate}
Hence for $q_2 = 0$, all zeros are actually missing and wL1-NMF with $\lambda = 0$ should perform better than $\lambda = 1$ (L1-NMF). On the contrary, for $q_2 > 0$, we expect wL1-NMF with $\lambda > 0$ to perform better since some of the zeros have been obtained by thresholding small values to zero.  

 Figure~\ref{fig:lambda_mc} shows the final relative errors with the original low-rank matrix, $\lVert \Hat{W} \Hat{H}-W^*H^* \rVert_F/\lVert \Hat{W} \Hat{H} \rVert_F$, 
 where $(W^*,H^*)$ is the  wL1-NMF solution for different values of $\lambda$, and for different combinations of percentages $q_1$ of zeros in the data and fractions $q_2$ of false zeros. 
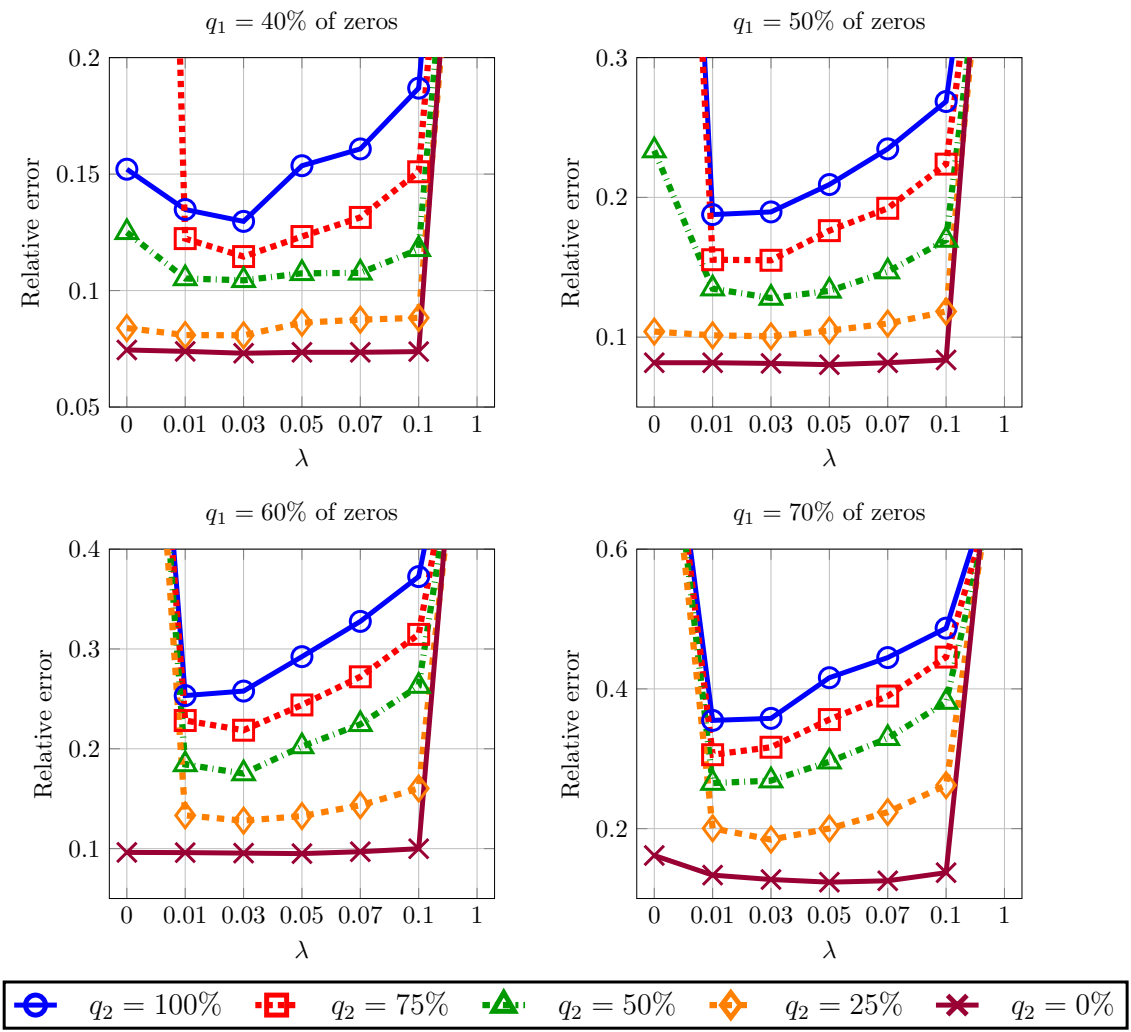
\begin{figure}[ht!] 
\begin{center}
     \begin{minipage}[h!]{\linewidth}
     \begin{center}
         \resizebox{0.9\linewidth}{!}{\begin{tikzpicture}

\pgfplotsset{
    myplot/.style={
        xmin=-0.3, xmax=6.3,
        xtick={0,1,2,3,4,5,6},
        xticklabels={0,0.01,0.03,0.05,0.07,0.1,1},
        mark options={solid, fill=none, line width=1.2pt},
        grid=both
    }
}

\begin{groupplot}[
    group style={
        group size=2 by 2,
        horizontal sep=2cm,
        vertical sep=2cm
    },
    width=7cm,
    height=6.5cm,
    myplot,
    xlabel={$\lambda$},
    ylabel={Relative error},
    scaled y ticks=false,
    ticklabel style={
        /pgf/number format/fixed,
        /pgf/number format/precision=2
    },
]

\nextgroupplot[
    title={$q_1=40\%$ of zeros},
    ymin=0.05, ymax=0.2,
    legend to name=legendaComune,
    legend style={
        font=\fontsize{12pt}{14pt}\selectfont,
        mark size=3pt,
        line width=1.2pt,
        draw=black,
        fill=white,
        legend columns=-1,
        column sep=1em,
        mark options={solid, fill=none, line width=2pt},
    }
]

\addplot[blue, mark=o, mark size=4pt, line width=2pt] table[x=x,y=yA] {Tikz/Rec_system1_0_4.dat};
\addplot[red, mark=square, dotted, mark size=4pt, line width=2.5pt] table[x=x,y=yB] {Tikz/Rec_system1_0_4.dat};
\addplot[green!60!black, mark=triangle, dashdotted, mark size=5pt, line width=2.5pt] table[x=x,y=yC] {Tikz/Rec_system1_0_4.dat};
\addplot[orange, mark=diamond, dashed, mark size=5pt, line width=2.5pt] table[x=x,y=yD] {Tikz/Rec_system1_0_4.dat};
\addplot[purple!80!black, mark=x, mark size=5.5pt, line width=2pt] table[x=x,y=yE] {Tikz/Rec_system1_0_4.dat};

\addlegendentry{$q_2=100\%$}
\addlegendentry{$q_2=75\%$}
\addlegendentry{$q_2=50\%$}
\addlegendentry{$q_2=25\%$}
\addlegendentry{$q_2=0\%$}

\nextgroupplot[
    title={$q_1=50\%$ of zeros},
    ymin=0.05, ymax=0.3
]

\addplot[blue, mark=o, mark size=4pt, line width=2pt] table[x=x,y=yA] {Tikz/Rec_system1_0_5.dat};
\addplot[red, mark=square, dotted, mark size=4pt, line width=2.5pt] table[x=x,y=yB] {Tikz/Rec_system1_0_5.dat};
\addplot[green!60!black, mark=triangle, dashdotted, mark size=5pt, line width=2.5pt] table[x=x,y=yC] {Tikz/Rec_system1_0_5.dat};
\addplot[orange, mark=diamond, dashed, mark size=5pt, line width=2.5pt] table[x=x,y=yD] {Tikz/Rec_system1_0_5.dat};
\addplot[purple!80!black, mark=x, mark size=5.5pt, line width=2pt] table[x=x,y=yE] {Tikz/Rec_system1_0_5.dat};

\nextgroupplot[
    title={$q_1=60\%$ of zeros},
    ymin=0.05, ymax=0.4
]

\addplot[blue, mark=o, mark size=4pt, line width=2pt] table[x=x,y=yA] {Tikz/Rec_system1_0_6.dat};
\addplot[red, mark=square, dotted, mark size=4pt, line width=2.5pt] table[x=x,y=yB] {Tikz/Rec_system1_0_6.dat};
\addplot[green!60!black, mark=triangle, dashdotted, mark size=5pt, line width=2.5pt] table[x=x,y=yC] {Tikz/Rec_system1_0_6.dat};
\addplot[orange, mark=diamond, dashed, mark size=5pt, line width=2.5pt] table[x=x,y=yD] {Tikz/Rec_system1_0_6.dat};
\addplot[purple!80!black, mark=x, mark size=5.5pt, line width=2pt] table[x=x,y=yE] {Tikz/Rec_system1_0_6.dat};

\nextgroupplot[
    title={$q_1=70\%$ of zeros},
    ymin=0.1, ymax=0.6
]

\addplot[blue, mark=o, mark size=4pt, line width=2pt] table[x=x,y=yA] {Tikz/Rec_System1_0_7.dat};
\addplot[red, mark=square, dotted, mark size=4pt, line width=2.5pt] table[x=x,y=yB] {Tikz/Rec_System1_0_7.dat};
\addplot[green!60!black, mark=triangle, dashdotted, mark size=5pt, line width=2.5pt] table[x=x,y=yC] {Tikz/Rec_System1_0_7.dat};
\addplot[orange, mark=diamond, dashed, mark size=5pt, line width=2.5pt] table[x=x,y=yD] {Tikz/Rec_System1_0_7.dat};
\addplot[purple!80!black, mark=x, mark size=5.5pt, line width=2pt] table[x=x,y=yE] {Tikz/Rec_System1_0_7.dat};

\end{groupplot}

\node at ($(group c1r2.south)!0.5!(group c2r2.south) + (0,-1.5cm)$)
{\pgfplotslegendfromname{legendaComune}};

\end{tikzpicture}}
     \end{center}
     \end{minipage}    
       \caption{Relative error averaged over 10 runs between the noiseless data matrix and the wL1-NMFs for different values of~$\lambda$. 
       Each plot corresponds to different percentages $q_1$ of zeros, and each line represents a percentage $q_2$ of false zeros among the missing values. 
       }
       \label{fig:lambda_mc} 
        \end{center} 
\end{figure} 
 Figure~\ref{fig:lambda_mc} shows  that the choice of $\lambda=1$ (L1-NMF) yields worse results than all the other values of $\lambda$. 
 Moreover, as expected, $\lambda=0$ is a good choice when there are no false zeros among the zero entries (that is, $q_2 = 0$). 
 However, small values of $\lambda$ perform well even when $q_2 =0$, while they perform even better than $\lambda = 0$ for a high level of missing entries (namely, $q_1 = 0.7$, $q_2 = 0$). This is because the second term in  wL1-NMF acts as a regularizer which is well-known to improve the accuracy of low-rank factorization-based recommender systems when a large portion of the data is missing, avoiding overfitting~\cite{koren2009matrix}. Note also that this regularizer would be particularly meaningful  when data is not missing at random and missing entries may correspond to a negative preference~\cite{wang2018modeling}.  It would be interesting to use wL1-NMF in this context, which would be more robust than standard least-squares-based approaches~\cite{koren2009matrix}; this is a topic of further research. 
 
 In the presence of false zeros ($q_2 > 0$), L1-NMF has significantly larger relative errors than wL1-NMF, while wL1-NMF with $\lambda = 0$ 
 gets significantly worse as the number of false zeros increases (that is, $q_2$). A good choice of $\lambda$ is therefore crucial to obtain good factorizations in the presence of false zeros. wL1-NMF does not appear to be too sensitive to this parameter, as values of $\lambda$  between 0.01 and 0.05 provide good results in all cases. 

\subsection{Topic modeling}
We consider the Topic Detection Task (TDT2) dataset, which contains a collection of articles from various sources, collected from January to June of 1998, on $r = 30$ selected topics. It consists of a matrix $X \in \mathbb{R}^{m \times n}$ with $m =19528$ and $n = 9394$, where the $(i,j)$th entry is the number of occurrences of the word $i$ in document $j$.
Such datasets are typically very sparse since only a small portion of the words in the dictionary are used in each document. The percentage of zero entries in TDT2 is $99.37\%$. 
When computing the NMF of $X \approx WH$, the columns of $W$ represent topics, that is, bags of words that are used to reconstruct multiple documents (the columns of $X$), while the matrix $H$ tells us which topics are discussed in which document~\cite{lee1999learning}. 
We present this application for two reasons: 
\begin{enumerate}
    \item[(a)] This extremely sparse scenario is a typical situation where L1-NMF might provide too sparse solutions, and we show that wL1-NMF leads to significantly better  solutions. 

\item[(b)] The dimensions of the dataset cause the NS and SUB algorithms to run out of memory after few iterations and make the standard CD approach extremely slow (more than 4 hours per iteration, $\sigma$ is equal to 175 for this dataset) on the  computer we used for these experiments. 
Exploiting the sparsity of the data, \name\ has a reasonable cost per iteration (about 70 seconds). 
\end{enumerate}

It is beyond the scope of this paper to compare our algorithm to state-of-the-art techniques in topic modeling, and we compare our results to FroNMF.  
 The wL1-NMF factorizations are computed by 15 iterations of \name\, initialized with 3 iterations of HALS, while we use 30 iterations of HALS to compute FroNMF.
 We present the results for a single random initialization to be able to compare the recovered topics. 
 The computational cost of wL1-NMF is consistently larger than that of FroNMF. In fact, \name\ requires approximately 70 seconds, while HALS requires approximately 1 second per iteration. 
 We observed that, in such a sparse scenario, initializing the wL1-NMF model with HALS is crucial as it allows to obtain reasonably dense results for larger values of $\lambda$. If random initialization is used, the $\lambda$ parameter needs to be smaller in order to prevent extremely sparse solutions.

Table~\ref{tab:topic_mod} reports the average number of nonzero entries per column in $W$, that is, the average number of words in each topic. It also displays the number of topics with less than 5 words, as we consider them less meaningful. 
\begin{table}[ht!]
\begin{center}
\begin{tabular}{c||c|c|c|c|c|c|c}
\hline
&  \multicolumn{6}{c|}{wL1-NMF} & FroNMF\\
 \midrule
$\lambda$ & 0 & 0.01 & 0.04 & 0.08 & 0.1 & 1.0 & / \\
\hline
Average words per topic & 12898 & 1780 & 417 & 170 & 123 & 4 & 5275  \\
\hline
topics less than 5 words & 0 & 0 & 0 & 0 & 1 & 24 & 0  \\
\hline
\end{tabular}
\caption{Impact of the parameter $\lambda$ on the 
wL1-NMF solution on the TDT2 data.} 
\label{tab:topic_mod}   
\end{center}
\end{table} 
As expected, the larger $\lambda$, the sparser the factor $W$ in wL1-NMF, that is, the fewer the number of words describing each topic. This is a clear example where L1-NMF ($\lambda=1$) produces too sparse solutions since 24 out of 30 topics are described by less than 5 words due to the original sparsity in the data. 
A good choice of the $\lambda$ parameter, such as $\lambda=0.08$, enhances the interpretability of the factorization by providing sparser topics. 
In fact, 
wL1-NMF 
removes approximately $97 \%$ of the words for each topic compared to FroNMF; from 5275 words on average for FroNMF, to 170 for wL1-NMF with $\lambda = 0.08$.  
A sparser factorization means a more straightforward interpretation and fewer mixed topics. 

Let us provide a qualitative comparison.  
Table~\ref{tab:topic_mod_qual} shows, for two topics, the ten most important words, that is, the words corresponding to the ten largest entries in each column of $W$. 
The other topics are provided in the supplementary material. 
Overall, the topics extracted by FroNMF and wL1-NMF with $\lambda=0.08$ are similar; thus, we were not able to identify which factorization provides the best topic extraction in general, and the results vary from topic to topic. 
\begin{table}[h!]
\begin{center}
\begin{tabular}{c|c|c||c|c|c}
\hline
  \multicolumn{2}{c|}{wL1-NMF} & FroNMF & \multicolumn{2}{c|}{wL1-NMF} & FroNMF\\
 
 \midrule
   0.01  & 0.08   &  /  &  0.01  &   0.08   &  /  \\
\hline
  \textcolor{red}{kaczynski} & gov. & gov. &  \textcolor{red}{european} & \textcolor{red}{court}  & super\\

   \textcolor{red}{algerian} & party  & party & \textcolor{red}{martin} & \textcolor{red}{supreme}  & bowl \\
   \textcolor{red}{algeria} & india  &  \textcolor{red}{algeria} & \textcolor{red}{tour} & \textcolor{black}{bowl}  & game \\
   \textcolor{red}{abacha} & bjp  & election & \textcolor{red}{documents} & \textcolor{black}{super}  & denver \\
  srinagar & election  & elections &  \textcolor{red}{algeria} & bulls & broncos \\
 \textcolor{red}{massacres} & vajpayee  &  \textcolor{red}{algerian} & \textcolor{red}{privilege} & chicago  & packers \\
 gandhi & hindu  & \textcolor{red}{killed} & \textcolor{red}{golf} & jazz & green \\
  \textcolor{red}{algiers} & minister & india &  \textcolor{red}{pga} & utah  & bay \\
 bjp & elections  &  political &\textcolor{red}{supreme} & jordan  &  san\\
 india & voting  & \textcolor{red}{islamic} & \textcolor{red}{court} & game  & football \\
\hline
 \multicolumn{3}{c||}{Topic 6: Vajoayee election} &  \multicolumn{3}{c}{Topic 17: Sport / Super bowl}\\
 \hline
\end{tabular}
\caption{Ten most important  words for two extracted topics (Topic 6 and Topic 17) by  FroNMF and wL1-NMF on the TDT2 dataset. 
In red, we highlight the words that do not have a clear connection with the topic that the majority of the other words refer to. 
}
\label{tab:topic_mod_qual}
\end{center}
\end{table}
For $\lambda = 0.01$, wL1-NMF provides too dense solutions, with words that have no clear connection with each other (Topic 6) or words that are not related to the topic (Topic 17). 
wL1-NMF with $\lambda = 0.08$ provides a more meaningful Topic 6, in our opinion, related to the 1998 election of Vajpayee of the BJP party as prime minister of India. Even though the same event is covered by FroNMF, there are some unrelated words, such as "Algeria" and "killed". Conversely, Topic 17 extracted by FroNMF discusses the American football game (Super Bowl), in which the Denver Broncos faced the Green Bay Packers.
By contrast, the topic extracted by wL1-NMF with $\lambda = 0.08$ also contains two words  (``court'', ``supreme'') that are not related to the topic and mixes in vocabulary from the NBA finals between the Chicago Bulls and the Utah Jazz.
It therefore provides a more mixed topic than FroNMF. 

\section{Conclusion}

In this paper, we considered L1-NMF that minimizes the component-wise L1 norm between the data matrix and its low-rank approximation. 
We first established the NP-hardness of L1-NMF even in the rank-1 case. Then, we showed that L1-NMF produces sparser solutions than L2-NMF for sparse data. This motivated us to introduce  weighted L1-NMF (wL1-NMF), suitable for sparse data corrupted by false zeros. 
We proposed a CD algorithm for wL1-NMF, 
namely \name, that scales linearly with the number of nonzeros in the data, up to a logarithmic factor. It is, to the best of our knowledge, the first L1-NMF algorithm that can handle large-scale sparse data. 
We performed extensive numerical experiments on synthetic and real datasets, showing that wL1-NMF outperforms other NMF models when the data is  affected by sparse noise or false zeros. Moreover, our new algorithm, \name, performs well against the state of the art for L1-NMF and reduces the computational time of the original CD algorithm consistently when the data is sparse. We finally showed that wL1-NMF can produce meaningful results comparable to FroNMF on a large and sparse document dataset (namely, TDT2).

\bibliographystyle{spmpsci.bst}
\bibliography{bib.bib}

\newpage

\section*{Supplementary material}

Table~\ref{tab:topic_mod_qual_sup} provides the extracted topics for the TDT2 dataset for wL1-NMF with $\lambda=0.08$ and for FroNMF.

\setlength{\tabcolsep}{2pt}
\begin{longtable}[H]{c|c||c|c||c|c}

\hline
wL1 & Fro & wL1 & Fro & wL1 & Fro \\
 \midrule
\endfirsthead

\hline
wL1 & Fro & wL1 & Fro & wL1 & Fro \\
 \midrule
\endhead

spkr & saddam & iraq & weapons & spkr & president \\
president & secretary & iraqi & iraq & president & clinton \\
clinton & hussein & team & inspectors & clinton & house \\
saddam & albright & baghdad & iraqi & china & white \\
iraq & iraqi & sanctions & baghdad & says & clintons \\
hussein & president & weapons & butler & chinese & allegations \\
house & military & iraqis & sites & government & jones \\
white & weapons & ritter & inspections & correspondent & sexual \\
albright & foreign & butler & iraqs & rights & washington \\
diplomatic & madeleine & inspectors & presidential & human & china \\

\hline
\multicolumn{2}{c||}{Topic 1} &
\multicolumn{2}{c||}{Topic 2} &
\multicolumn{2}{c}{Topic 3}\\
\hline
\hline

percent & percent & today & today & government & government \\
market & stock & checking & top & party & party \\
stock & points & stories & stories & india & algeria \\
points & market & top & checking & bjp & election \\
dollar & index & secretary & hour & election & elections \\
stocks & stocks & hour & voice & vajpayee & algerian \\
index & prices & says & tomorrow & hindu & killed \\
yen & investors & washington & peter & minister & india \\
prices & dollar & general & jennings & elections & political \\
investors & markets & week & monica & voting & islamic \\

\hline
\multicolumn{2}{c||}{Topic 4} &
\multicolumn{2}{c||}{Topic 5} &
\multicolumn{2}{c}{Topic 6}\\
\hline
\hline

lewinsky & lewinsky & game & team & military & cable \\
house & starr & team & hockey & indonesia & military \\
white & house & play & game & students & italy \\
grand & white & goal & players & jakarta & italian \\
jury & monica & players & canada & suharto & marine \\
starr & grand & period & play & indonesian & plane \\
president & jury & end & teams & police & car \\
monica & independent & scored & goal & anti & accident \\
clinton & counsel & coach & olympic & troops & officials \\
investigation & investigation & round & czech & protests & ski \\

\hline
\multicolumn{2}{c||}{Topic 7} &
\multicolumn{2}{c||}{Topic 8} &
\multicolumn{2}{c}{Topic 9}\\
\hline
\hline

united & united & suharto & suharto & rudolph & gold \\
states & states & algeria & indonesia & clinic & won \\
japan & nations & algerian & president & bombing & medal \\
nations & american & algiers & jakarta & downhill & olympic \\
japanese & countries & islamic & indonesian & race & race \\
countries & britain & killed & economic & cup & mens \\
china & japan & suhartos & indonesias & eric & cup \\
asia & americans & militants & political & birmingham & womens \\
london & russia & economic & imf & alabama & slalom \\
american & canada & european & students & truck & olympics \\

\hline
\multicolumn{2}{c||}{Topic 10} &
\multicolumn{2}{c||}{Topic 11} &
\multicolumn{2}{c}{Topic 12}\\
\hline
\hline

\newpage

israel & israel & workers & workers & imf & economic \\
netanyahu & netanyahu & plant & general & economic & crisis \\
israeli & israeli & plants & motors & bank & asian \\
palestinian & palestinian & parts & strike & asian & financial \\
peace & peace & auto & plants & economy & asia \\
arafat & minister & flint & flint & financial & economy \\
palestinians & prime & motors & strikes & dlrs & billion \\
talks & talks & jobs & parts & crisis & countries \\
blair & arafat & strike & michigan & billion & south \\
london & palestinians & strikes & plant & government & japan \\

\hline
\multicolumn{2}{c||}{Topic 13} &
\multicolumn{2}{c||}{Topic 14} &
\multicolumn{2}{c}{Topic 15}\\
\hline
\hline

council & annan & court & super & pope & pope \\
weapons & council & supreme & bowl & cuba & cuba \\
annan & security & bowl & game & cuban & cuban \\
security & general & super & denver & castro & visit \\
iraq & iraq & bulls & broncos & visit & john \\
agreement & agreement & chicago & packers & paul & paul \\
inspectors & kofi & jazz & green & john & castro \\
resolution & secretary & utah & bay & havana & havana \\
butler & deal & jordan & san & mass & fidel \\
ambassador & baghdad & game & football & church & church \\

\hline
\multicolumn{2}{c||}{Topic 16} &
\multicolumn{2}{c||}{Topic 17} &
\multicolumn{2}{c}{Topic 18}\\
\hline
\hline

means & american & olympic & nagano & spkr & spkr \\
order & going & nagano & games & starr & voice \\
terms & public & hockey & japan & peter & news \\
view & news & winter & olympic & voice & correspondent \\
certainly & york & gold & olympics & jennings & peter \\
reason & city & olympics & winter & correspondent & announcer \\
interview & political & team & japanese & abc & jennings \\
change & viagra & medal & athletes & camera & abc \\
simply & end & games & sports & tonight & headline \\
actually & country & athletes & japans & rush & voa \\

\hline
\multicolumn{2}{c||}{Topic 19} &
\multicolumn{2}{c||}{Topic 20} &
\multicolumn{2}{c}{Topic 21}\\
\hline
\hline

mckinney & mckinney & iraq & iraq & winfrey & winfrey \\
kaczynski & court & gulf & military & government & show \\
sergeant & sexual & persian & gulf & show & texas \\
martial & major & aircraft & war & claims & beef \\
gene & sergeant & cable & attack & beef & oprah \\
mckinneys & gene & marine & strike & ruled & cattle \\
enlisted & army & italian & force & ranchers & ranchers \\
obstruction & martial & military & action & oprah & prices \\
armys & enlisted & accident & persian & disease & disease \\
soldier & misconduct & italy & kuwait & cattle & cow \\

\hline
\multicolumn{2}{c||}{Topic 22} &
\multicolumn{2}{c||}{Topic 23} &
\multicolumn{2}{c}{Topic 24}\\
\hline
\hline

\newpage

nuclear & nuclear & tobacco & tobacco & economic & says \\
pakistan & india & bill & industry & cohen & headline \\
india & pakistan & industry & bill & defense & voa \\
tests & tests & senate & companies & asia & announcer \\
test & test & smoking & smoking & secretary & correspondent \\
pakistani & pakistani & companies & legislation & william & hes \\
sanctions & indias & legislation & congress & asian & attorney \\
indias & indian & congress & billion & crisis & king \\
testing & testing & republicans & settlement & financial & ginsburg \\
indian & weapons & billion & settlement & says & reports \\

\hline
\multicolumn{2}{c||}{Topic 25} &
\multicolumn{2}{c||}{Topic 26} &
\multicolumn{2}{c}{Topic 27}\\
\hline
\hline

nagano & bombing & school & school & reporter & reporter \\
house & clinic & students & students & king & cnn \\
white & rudolph & jonesboro & arkansas & ray & going \\
tucker & birmingham & boys & jonesboro & cnn & news \\
texas & federal & arkansas & middle & hes & headline \\
death & alabama & police & shooting & martin & reports \\
mrs & womens & shooting & boys & james & martin \\
games & eric & children & police & family & tell \\
ceremony & killed & middle & fire & going & sources \\
execution & atlanta & teacher & teacher & luther & king \\

\hline
\multicolumn{2}{c||}{Topic 28} &
\multicolumn{2}{c||}{Topic 29} &
\multicolumn{2}{c}{Topic 30}\\
\hline
\hline
\caption{Ten most relevant words for the remaining of the extracted topics by the FroNMF and wL1-NMF ($\lambda=0.08$) on the TDT2 dataset.}
\label{tab:topic_mod_qual_sup}
\end{longtable}

\end{document}